\tikzset{terminal state/.style={draw,rectangle,minimum size=.3in}}
\newtheorem{theorem}{Theorem}
\newtheorem*{theorem*}{Theorem}
\newtheorem{defi}{Definition}
\newtheorem{prop}{Proposition}
\newtheorem{corollary}{Corollary}
\def\defeq{\dot=}
\newcommand*{\mytop}{\mathrel{\scalebox{0.5}{$\top$}}}
\newcommand*{\myplus}{\mathrel{\scalebox{0.5}{$+$}}}
\newcommand\occ{\stackrel{\mathclap{\normalfont\mbox{\scalebox{0.4}{\textsc{occ}}}}}{\sim}}
\tikzstyle{startstop} = [rectangle, rounded corners, minimum width=3cm, minimum height=1cm,text centered, draw=black, fill=red!30]
\tikzstyle{io} = [trapezium, trapezium left angle=70, trapezium right angle=110, minimum width=3cm, minimum height=1cm, text centered, draw=black, fill=blue!30]
\tikzstyle{process} = [rectangle, minimum width=3cm, minimum height=1cm, text centered, draw=black, fill=orange!30]
\tikzstyle{decision} = [diamond, minimum width=3cm, minimum height=1cm, text centered, draw=black, fill=green!30]
\tikzstyle{arrow} = [thick,->,>=stealth]
\DeclareMathOperator*{\argmax}{argmax}
\theoremstyle{definition}
\definecolor{color1}{RGB}{215,25,28}
\definecolor{color2}{RGB}{26,150,65}
\definecolor{color3}{RGB}{200,10,10}
\title{Non-Markovian policies occupancy measures}
\author{
Romain Laroche* \\
Microsoft Research Montr\'eal\\
\texttt{rolaroch@microsoft.com}
\And
Remi Tachet des Combes* \\
Microsoft Research Montr\'eal \\
\texttt{retachet@microsoft.com}
\And
Jacob Buckman \\
MILA, McGill University, Montr\'eal \\
\texttt{jacobbuckman@gmail.com}
}
\begin{document}

\maketitle

\begin{abstract}
A central object of study in Reinforcement Learning (RL) is the Markovian policy, in which an agent’s actions are chosen from a memoryless probability distribution, conditioned only on its current state. The family of Markovian policies is broad enough to be interesting, yet simple enough to be amenable to analysis. However, RL often involves more complex policies: ensembles of policies, policies over options, policies updated online, etc. Our main contribution is to prove that the occupancy measure of any non-Markovian policy, \textit{i.e.}, the distribution of transition samples collected with it, can be equivalently generated by a Markovian policy.

This result allows theorems about the Markovian policy class to be directly extended to its non-Markovian counterpart, greatly simplifying proofs, in particular those involving replay buffers and datasets. We provide various examples of such applications to the field of Reinforcement Learning.
\end{abstract}

\section{Introduction}

\emph{Reinforcement learning (RL)} is a popular and powerful theoretical framework for computational decision-making \cite{suttonbarto}, with many impressive accomplishments \cite{dqn, alphago}. A central object of study in the field is the \emph{Markovian policy}, in which an agent’s actions are chosen from a \textit{memoryless} probability distribution, \textit{i.e.}, are conditioned only on its current state. The family of Markovian policies is broad enough to be interesting, yet simple enough to be amenable to analysis. For example, every MDP admits an optimal Markovian policy \cite{suttonbarto}, and it is possible to guarantee monotonic improvement when moving between Markovian policies \cite{kakadelangford,Laroche2021}.

However, RL settings and algorithms often also involve \emph{non-Markovian policies}, which may choose different probability distributions of actions in the same state depending on additional context. For example, non-Markovian policies are encountered in \emph{Offline RL}~\cite{Levine2020}, where the agent is not given the opportunity to interact with the environment at all, but instead must learn from a dataset of trajectories collected by an arbitrary set of policies. Other examples are algorithms that use replay buffers~\cite{dqn}, update online~\cite{dqn}, and/or allow sub-policy-switching, such as in the Semi MDP framework, options, or hierarchical policies~\cite{barto2003recent,nachum2018data,stolle2002learning,Sutton1999}.
Formal analysis of these settings is possible, but somewhat involved. A typical approach is to prove a result under the assumption that trajectories are collected with a Markovian policy~\cite{Dias2019b,awr}.

Our main contribution is to show a certain form of equivalence between Markovian policies and collections of non-Markovian policies. Concretely, we prove that the \textit{occupancy measure}~\cite{csaba} (also called state-action visits~\cite{Sutton1998}, or distribution~\cite{pmlr-v108-cheng20b,conf/icml/SilverLHDWR14}) of any non-Markovian policy can be equivalently obtained by a Markovian policy.
\begin{theorem}[Restricted version of Theorem~\ref{thm:main}]
    Let $m$ be an MDP with finite state and action spaces, a discount factor $\gamma < 1$, and let $\pi$ be a policy. Then, there exists a Markovian policy $\tilde{\pi}$ that has the same occupancy measure in $m$ as $\pi$.
    \label{thm:informal}
\end{theorem}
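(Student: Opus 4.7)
The approach is to construct $\tilde\pi$ explicitly as the state-conditional of $\pi$'s occupancy, and then show that $d^\pi$ and $d^{\tilde\pi}$ satisfy the same Bellman flow equation, whose unique solution in the finite discounted setting forces them to coincide.

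First, I would fix notation. Let $d^\pi(s,a) = (1-\gamma)\sum_{t\ge 0}\gamma^{t}\Pr_\pi(S_t=s, A_t=a)$ be the discounted occupancy of $\pi$, and $d^\pi(s) = \sum_a d^\pi(s,a)$ its state marginal. Both are well-defined finite numbers because $\gamma < 1$ and the state and action spaces are finite. Define the candidate Markovian policy by
\[
\tilde\pi(a\mid s) \;=\; \frac{d^\pi(s,a)}{d^\pi(s)}
\]
whenever $d^\pi(s) > 0$, and arbitrarily (say, uniformly over actions) otherwise. Note that $\tilde\pi(\cdot\mid s)$ is a valid probability distribution because $\sum_a d^\pi(s,a) = d^\pi(s)$, and that by construction $d^\pi(s,a) = \tilde\pi(a\mid s)\, d^\pi(s)$ for every $(s,a)$.

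Second, I would establish the Bellman flow identity for $d^\pi$,
\[
d^\pi(s) \;=\; (1-\gamma)\,\mu_0(s) \;+\; \gamma \sum_{s',a'} P(s\mid s',a')\, d^\pi(s',a'),
\]
where $\mu_0$ is the initial state distribution. This is where the Markovianity of the \emph{environment} (as opposed to the policy) is used: conditional on $(S_t, A_t)$, the law of $S_{t+1}$ is $P(\cdot\mid S_t, A_t)$ regardless of the history from which $\pi$ sampled $A_t$. A tower-property computation and a geometric rearrangement of the sum over $t$ then yield the displayed equation.

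Third, substituting the identity $d^\pi(s',a') = \tilde\pi(a'\mid s')\, d^\pi(s')$ into the right-hand side of the flow equation shows that $d^\pi$ is a fixed point of the Markovian flow operator induced by $\tilde\pi$. The occupancy $d^{\tilde\pi}$ of the Markovian policy $\tilde\pi$ is also a fixed point of that same operator, and in a finite state space with $\gamma < 1$ this affine operator is a $\gamma$-contraction in $\ell_1$, so its fixed point is unique. Hence $d^\pi = d^{\tilde\pi}$ as state distributions, and multiplying by $\tilde\pi(a\mid s)$ on both sides recovers equality of the full state-action occupancies.

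I expect the main obstacle to be the rigorous derivation of the flow equation for a fully history-dependent $\pi$: one must argue that, after summing $\Pr_\pi$ over all histories consistent with $(S_t, A_t) = (s', a')$, applying $P(s\mid s', a')$ and reindexing by $t$ commutes correctly with the geometric series. Everything else is bookkeeping: the value of $\tilde\pi$ on unvisited states is immaterial since such states carry zero mass under both policies, and uniqueness of the flow fixed point is the standard invertibility of $I - \gamma (P^{\tilde\pi})^\top$.
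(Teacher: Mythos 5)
Your proof is correct and follows essentially the same route as the paper's: you construct $\tilde\pi$ as the state-conditional of the occupancy (exactly the paper's Radon--Nikodym derivative, specialized to the finite case as in Corollary~\ref{thm:main2}), and you rely on the same conservation-of-mass / Bellman-flow identity for the history-dependent $\pi$, which is the real content of the argument. The only difference is cosmetic: the paper closes by unrolling the flow recursion and showing the $\gamma^{t+1}$ remainder vanishes, whereas you invoke uniqueness of the fixed point of the $\gamma$-contraction $d \mapsto (1-\gamma)\mu_0 + \gamma (P^{\tilde\pi})^{\top} d$ --- two equivalent ways of finishing in the finite discounted setting.
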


Theorem 1 is stated in unpublished lecture notes~\cite{csaba} and to the best of our knowledge has not been formally proved. In this paper, we provide a generalization of the result, covering all Markov Decision Processes (including infinite state and action spaces and absence of discounting), and we give a necessary and sufficient condition for the theorem to hold, namely the $\sigma$-finiteness of the occupancy measure of the non-Markovian policy.

Since the performance of a policy only depends on its occupancy measure, and not on the actual trajectories, the first implication of the theorem is that, for any policy, there always exists a Markovian policy with the same performance.
More importantly, we argue that our result allows many theorems about the Markovian policy class to be directly extended to its non-Markovian counterpart, greatly simplifying proofs, in particular those involving replay buffers~\cite{awr} and datasets~\cite{Laroche2017safe,shi2022pessimistic,yin2021near}.



The paper is organized as follows. First, we give a minimal example illustrating the state-action visits equivalence and the construction of the Markovian policy (Section \ref{sec:intuition}).
We then provide the background and notations used in the paper (Section \ref{sec:background}). Next, we formally introduce the concept of occupancy measure (Section \ref{sec:occupancy}). Section \ref{sec:theory} details and discusses our various results. In Section \ref{sec:application}, we apply our main theorem to two domains: Offline RL and experience replays. Furthermore, we speculatively motivate its impact by enumerating fields of research where it could be potentially useful. Section~\ref{sec:proofs} details the proofs of our main results. Finally, Section \ref{sec:conclusion} concludes the paper.

\begin{wrapfigure}{r}{0.31\textwidth}
        \vspace{-60pt}
        \begin{center}
            \scalebox{1}{
                \begin{tikzpicture}[->, >=stealth', scale=1.5 , semithick, node distance=2cm]
                    \tikzstyle{every state}=[fill=white,draw=black,thick,text=black,scale=1]
                    \node[state]    (x0)                {$s$};
                    \node[state,accepting]    (xf)[right of=x0]   {$s_{f}$};
                    \path
                    (x0) edge[above]    node{$a_2$}     (xf)
                    (x0) edge[out=135, in=225, loop, right]    node{$a_1$}     (x0);
                \end{tikzpicture}
            }
            \vspace{-40pt}
            \caption{Minimal MDP such that $a_1$ loops and $a_2$ is terminal.}
            \label{fig:minimalMDP}
        \end{center}
\end{wrapfigure}
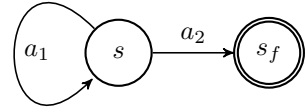
\subsection{Illustrative example}
\label{sec:intuition}
    We consider the MDP $m$ represented in Figure \ref{fig:minimalMDP}, with a single state $\mathcal{S} = \{s\}$ two actions $\mathcal{A}= \{a_1,a_2\}$, and such that $p(s|s,a_1)=1$ and $a_2$ terminates the episode. For a fixed $n\geq 1$, we consider the following deterministic policy: $\pi(a_1|s,t<n)=1$ and $\pi(a_2|s,t=n)=1$. All its trajectories are therefore the same: it performs $a_1$ $n$ times, then $a_2$ which terminates the trajectory. Fundamentally, $\pi$ is non-Markovian since its actions depends on the timestep $t$. It is direct to observe that the expected discounted state and state-action visits (denoted $\mu_{\gamma}^{\pi}(s)$ and $\mu_{\gamma}^{\pi}(s,a)$ respectively) are:
    \begin{align*}
        \mu_{\gamma}^{\pi}(s) &= \sum_{t=0}^{n} \gamma^t  & \mu_{\gamma}^{\pi}(s,a_1) &= \sum_{t=0}^{n-1} \gamma^t & \mu_{\gamma}^{\pi}(s,a_2) &= \gamma^{n}.
    \end{align*}
        We define a Markovian policy $\tilde{\pi}$, equal to the expected behavior of $\pi$ in $s$ over all possible trajectories:
    \begin{align}
        \tilde{\pi}(a_1|s) &\doteq\frac{\mu_{\gamma}^{\pi}(s,a_1)}{\mu_{\gamma}^{\pi}(s)}= \frac{\sum_{t=0}^{n-1} \gamma^t}{\sum_{t=0}^{n} \gamma^t} & \tilde{\pi}(a_2|s) &\doteq\frac{\mu_{\gamma}^{\pi}(s,a_2)}{\mu_{\gamma}^{\pi}(s)}= \frac{\gamma^{n}}{\sum_{t=0}^{n} \gamma^t}.
    \end{align}
    Given the MDP structure, we have $\mu_{\gamma}^{\tilde{\pi}}(s,a_1) = \tilde{\pi}(a_1|s) + \tilde{\pi}(a_1|s) \; \gamma \mu_{\gamma}^{\tilde{\pi}}(s,a_1)$. Therefore:
    \begin{align*}
        \mu_{\gamma}^{\tilde{\pi}}(s,a_1) = \frac{\tilde{\pi}(a_1|s)}{1- \tilde{\pi}(a_1|s)\gamma }
        = \frac{\sum_{t=0}^{n-1} \gamma^t}{\sum_{t=0}^{n} \gamma^t- \gamma\sum_{t=0}^{n-1} \gamma^t} = \sum_{t=0}^{n-1} \gamma^t = \mu_{\gamma}^{\pi}(s,a_1).
    \end{align*}
    Similarly, $\mu_{\gamma}^{\tilde{\pi}}(s,a_2) = \tilde{\pi}(a_2|s) + \tilde{\pi}(a_1|s) \; \gamma \mu_{\gamma}^{\tilde{\pi}}(s,a_2)$, which gives:
    \begin{align*}
        \mu_{\gamma}^{\tilde{\pi}}(s,a_2) = \frac{\tilde{\pi}(a_2|s)}{1- \tilde{\pi}(a_1|s)\gamma } = \frac{\gamma^{n}}{\sum_{t=0}^{n} \gamma^t- \gamma\sum_{t=0}^{n-1} \gamma^t} = \gamma^{n} = \mu_{\gamma}^{\pi}(s,a_2).
    \end{align*}
    
    We see that $\tilde{\pi}$ has exactly the same state-action visits as $\pi$. Our main theorem states that, under mild assumptions, such a policy always exists. We wish to emphasize, however, that their trajectories distributions differ: all trajectories generated with $\pi$ have the same length, $n+1$, while the length of trajectories generated with $\tilde{\pi}$ follows a geometric law. It is also worth noticing that $\tilde{\pi}$ depends on the choice of the discount factor, but that, for any discount factor, there will exist a Markovian policy equivalent to $\pi$ in terms of state-action occupancy.
    
\subsection{Background and notations}
\label{sec:background}
In this section, we introduce the definitions and notations required to state our result in its most general form. Capitalized letters denote random variables. Sets are denoted by calligraphic letters, and subsets by lower-case greek letters, except for $\mu$, $\gamma$, and $\pi$, which are well-established notations for measures, discount factors, and policies respectively. If not mentioned otherwise, any set $\mathcal{X}$ is equipped with a $\sigma$-algebra $\Sigma_\mathcal{X}$ and a measure $\mu_\mathcal{X}$. We will write $\mathcal{P}(\Sigma_\mathcal{X})$ for the set of probability measures over $\Sigma_\mathcal{X}$. 

Typically, for a countable set $\mathcal{X}$, $\Sigma_\mathcal{X}$ is the set of all subsets of $\mathcal{X}$, written $2^\mathcal{X}$ (also called its powerset), and $\mu_\mathcal{X}$ is defined as the counting measure, \textit{i.e.}, $\mu_\mathcal{X}(\xi)$ is the number of elements in $\xi$ for any $\xi\in\Sigma_\mathcal{X}$. Typically, for $\mathcal{X}\subset\mathbb{R}^n$, $\Sigma_\mathcal{X}$ is the Lebesgue $\sigma$-algebra and $\mu_\mathcal{X}(\xi)$ the Lebesgue measure. 

A Markov Decision Process (MDP) is a tuple $m=\langle \mathcal{S}, \mathcal{A},p_0,p,r,\gamma \rangle$, where $\mathcal{S}$ is the state space, $\mathcal{A}$ the action space, $p_0(\cdot)\in\mathcal{P}(\Sigma_{\mathcal{S}}\times\{\emptyset,\{s_f\}\})$ denotes the initial state distribution, $p(\cdot|s,a)\in\mathcal{P}(\Sigma_{\mathcal{S}}\times\{\emptyset,\{s_f\}\})$ is the transition kernel, $s_f\notin\mathcal{S}$ denotes the final state where episode termination happens, $r(s,a)\in[-r_{\mytop},r_{\mytop}]$ is the bounded stochastic reward function, and $\gamma\in[0,1]$ denotes the discount factor.

\begin{defi}[Policy]
    A policy $\pi$ represents any function mapping its trajectory history $h_{t} = \langle s_0,a_0,r_0\dots,s_{t-1},a_{t-1},r_{t-1},s_t \rangle$ to a distribution over actions: $\pi(\cdot|h_t)\in \mathcal{P}(\Sigma_\mathcal{A})$. Let $\Pi$ denote the space of policies. \label{def:policy}
\end{defi}

This functional definition considers the policy as a black box: its inner workings do not matter as long as they do not manifest in the environment. It is a compact, necessary and sufficient, way of describing any policy\footnote{Although, it is arguably an inefficient way of designing/implementing one.}: either two policies differ for some (accessible) history and they are distinguishable, or they do not differ anywhere and they are indistinguishable. Thus, this is a fully general definition\footnote{In order to have a well-defined occupancy measure, we must restrict ourselves to policies that reset their memory at the start of every trajectory. A thorough discussion is provided at the end of Section \ref{sec:theory} regarding this.}: any behavior can be implemented by acting according to a member of $\Pi$. 

\begin{defi}[Markovian policy]
    Policy $\pi$ is said to be Markovian if its action probabilities only depend on the current state $s_t$: $\pi(\cdot|h_t)=\pi(\cdot|s_t)\in \mathcal{P}(\Sigma_\mathcal{A})$. Otherwise, policy $\pi$ is non-Markovian. We let $\Pi_\textsc{m}$ denote the space of Markovian policies. We let $\Pi_\textsc{dm}$ denote the space of deterministic Markovian policies, \textit{i.e.}, the set of Markovian policies such that $\pi(\cdot|s)$ is a Dirac distribution in some action $a_s$ for any state $s\in\mathcal{S}$. \label{def:markovpolicy}
\end{defi}

Because of the Markovian property of the MDP environment, Markovian policies are often a sufficiently broad set to solve the RL problem. In particular, there always exists an optimal policy that is deterministic Markovian, and the Markovian policy space happens to be convenient to navigate smoothly between deterministic Markovian policies. 

Next, we shall also need various basic measure theory concepts that we recall here~\cite{feller1}.

\begin{defi}[$\sigma$-finiteness] A measure $\mu$ on a measurable space $(\mathcal{X}, \Sigma_\mathcal{X})$ is $\sigma$-finite if there exists a sequence $(\xi_n)_{n \in \mathbb{N}} \in \Sigma_\mathcal{X}^{\mathbb{N}}$ such that $\mathcal{X} = \cup_{n=0}^{\infty} \; \xi_n$ and $\mu(\xi_n) < +\infty$ for all $n \in \mathbb{N}$.
\end{defi}

\begin{defi}[Radon-Nikodym derivative] Let $\mu$ and $\nu$ denote two $\sigma$-finite measures where $\nu$ is absolutely continuous with respect to $\mu$ (\textit{i.e.}, $\mu(\xi) = 0 \implies \nu(\xi) = 0$). There exists a function $f: \mathcal{X} \to [0,+\infty]$ such that for all $\xi \in \Sigma_\mathcal{X}$,
\begin{align}
    \nu(\xi) = \int_\xi f(x) \mu(dx).
    \label{eq:RN}
\end{align}

Any function $f$ verifying~\ref{eq:RN} is called a Radon-Nikodym derivative and is denoted $\frac{d\nu}{d\mu}$. Two functions $f_1,f_2$ that verify~\ref{eq:RN} are equal up to a $\mu$-null set, \textit{i.e.}, $\mu\left(\{x \text{ s.t. } f_1(x)\neq f_2(x)\}\right)=0$.
\end{defi}

\subsection{Occupancy measures}
\label{sec:occupancy}
We now have the machinery required to introduce our main object of study: the occupancy measure.

\begin{defi}[Occupancy]
    Given measurable subsets of the state and action spaces, $\sigma\in\Sigma_{\mathcal{S}}, \alpha\in\Sigma_{\mathcal{A}}$, the occupancy $\mu^{\pi}_{\gamma}(\sigma,\alpha)$ of a policy $\pi\in\Pi$ in an MDP $m=\langle \mathcal{S}, \mathcal{A},p_0,p,r,\gamma \rangle$ is the expected discounted number of visits of a state-action pair $(s,a)\in\sigma\times\alpha$ occurring during a trajectory:
    \begin{align}
        \mu^{\pi}_{\gamma}(\sigma,\alpha) \coloneqq \mathbb{E}\left[\sum_{t=0}^\infty \gamma^t\mathds{1}\left(S_t \in \sigma\right)\times\mathds{1}\left(A_t \in \alpha\right) \bigg|\!\!\begin{array}{l}
    S_0\sim p_0(\cdot), A_t \sim \pi(\cdot|H_{t}),\\S_{t+1}\sim p(\cdot|S_t,A_t)\end{array}\right] .
        \label{eq:occupancy-measure}
    \end{align}
    \label{def:occupancy-measure}
\end{defi}
We will use the conventions that $\mu^{\pi}_{\gamma}(\sigma)\doteq\mu^{\pi}_{\gamma}(\sigma,\mathcal{A})$, and with finite state-action sets $\mu^{\pi}_{\gamma}(s,a)\doteq\mu^{\pi}_{\gamma}(\{s\},\{a\})$. Note that Definition \ref{def:occupancy-measure} holds both for discrete and continuous state and action spaces.

We start by establishing that the occupancy as defined in Eq. \eqref{eq:occupancy-measure} is well-defined and is a measure for any policy $\pi$ and any MDP $m$, this will allow us to leverage standard results from measure theory.
\begin{restatable}[\textbf{Occupancy is a measure}]{theorem}{measure}
    Let $\pi\in\Pi$ be any policy as defined in \ref{def:policy}, then, $\mu_{\gamma}^\pi$ is well-defined on $\mathbb{R}^{\myplus}\cup\{+\infty\}$ and is a measure.
    \label{thm:measure}
\end{restatable}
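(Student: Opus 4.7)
The plan is to decompose the occupancy measure into a countable sum of finite measures, one per time step, and then invoke standard facts about sums of measures. First I would construct the underlying probability space. Using the Ionescu-Tulcea extension theorem, the initial distribution $p_0$, the transition kernel $p$, and the policy $\pi$ (viewed as a Markov kernel from history space to $\mathcal{A}$, which is implicit in Definition \ref{def:policy}) together induce a unique probability measure $\mathbb{P}^\pi$ on the infinite trajectory space. Under this measure, each pair $(S_t, A_t)$ is a measurable random variable valued in $(\mathcal{S} \times \mathcal{A}, \Sigma_{\mathcal{S}} \otimes \Sigma_{\mathcal{A}})$, so its law $\nu_t$ is a well-defined probability measure on the product $\sigma$-algebra.

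Next I would extend the definition of $\mu_\gamma^\pi$ from measurable rectangles $\sigma\times\alpha$ to arbitrary $\xi \in \Sigma_\mathcal{S}\otimes\Sigma_\mathcal{A}$ by setting $\mu_\gamma^\pi(\xi) := \mathbb{E}[\sum_{t=0}^\infty \gamma^t \mathds{1}_\xi(S_t, A_t)]$. Since the integrand is a non-negative measurable function, Tonelli's theorem (or equivalently monotone convergence) lets me exchange expectation and sum:
\[
\mu_\gamma^\pi(\xi) \;=\; \sum_{t=0}^\infty \gamma^t\, \mathbb{E}[\mathds{1}_\xi(S_t,A_t)] \;=\; \sum_{t=0}^\infty \gamma^t\, \nu_t(\xi),
\]
which is a well-defined element of $\mathbb{R}^{\myplus}\cup\{+\infty\}$ (possibly infinite when $\gamma=1$). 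This expresses $\mu_\gamma^\pi$ as a countable sum of the finite measures $\gamma^t \nu_t$.

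To conclude that $\mu_\gamma^\pi$ is a measure, I would verify the two axioms. Clearly $\mu_\gamma^\pi(\emptyset)=0$ since each $\nu_t(\emptyset)=0$. For countable additivity, given a pairwise disjoint sequence $(\xi_k)_{k\in\mathbb{N}}$, each $\nu_t$ is countably additive, so
\[
\mu_\gamma^\pi\!\left(\bigcup_k \xi_k\right) = \sum_{t=0}^\infty \gamma^t \sum_{k=0}^\infty \nu_t(\xi_k) = \sum_{k=0}^\infty \sum_{t=0}^\infty \gamma^t \nu_t(\xi_k) = \sum_{k=0}^\infty \mu_\gamma^\pi(\xi_k),
\]
where swapping the two non-negative sums is again legal by Tonelli. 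Thus $\mu_\gamma^\pi$ is a measure on $\Sigma_\mathcal{S}\otimes\Sigma_\mathcal{A}$.

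The main obstacle is the first step: rigorously establishing the probability space for trajectories generated by a fully general, possibly non-Markovian $\pi$. This requires that $\pi$ be interpreted as a measurable Markov kernel on the history space (so that Ionescu-Tulcea applies), an assumption that is tacit in Definition \ref{def:policy} but should be acknowledged. Once this is in place, the remainder is routine manipulation of non-negative series and expectations.
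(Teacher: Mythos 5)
Your proof is correct and follows essentially the same route as the paper's: both reduce to the fact that the defining series has non-negative terms, so monotone convergence/Tonelli gives well-definedness in $\mathbb{R}^{\myplus}\cup\{+\infty\}$, and countable additivity is an interchange of non-negative double sums. You are somewhat more explicit than the paper on two points it leaves tacit --- the Ionescu--Tulcea construction of the trajectory measure for a history-dependent policy, and the extension of $\mu_\gamma^\pi$ from rectangles to the product $\sigma$-algebra $\Sigma_\mathcal{S}\otimes\Sigma_\mathcal{A}$ --- which is added rigor rather than a different argument.
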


We defer the proof of this result to Appendix~\ref{app:occupancymeasure}. The second interesting property of $\mu_{\gamma}^\pi$ concerns the discounted return $\rho^{\pi}_{\gamma}$ of $\pi$:
\begin{align}
    \rho^{\pi}_{\gamma} \coloneqq& \mathbb{E}\left[\sum_{t=0}^\infty \gamma^t R_t \bigg|\!\!\begin{array}{l}
S_0\sim p_0(\cdot), A_t \sim \pi(\cdot|H_{t}),\\R_t\sim r(S_t,A_t),S_{t+1}\sim p(\cdot|S_t,A_t)\end{array}\right].
    \label{eq:performance}
\end{align}

\begin{restatable}{lemma}{performance}
\label{lem:performance}
If $\rho^{\pi}_{\gamma}$ exists, then it is uniquely characterized by $\mu_{\gamma}^\pi$: $\rho^{\pi}_{\gamma} = \int_{\mathcal{S}} \int_{\mathcal{A}} \mathbb{E}\left[r(s,a)\right]\mu_\gamma^\pi(ds,da)$.
\end{restatable}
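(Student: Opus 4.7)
The plan is to rewrite $\rho^\pi_\gamma$ term-by-term in terms of the marginal distribution $\nu_t$ of $(S_t, A_t)$, then recognize $\mu_\gamma^\pi$ as the discounted sum of these marginals. First I would use linearity of expectation (together with the hypothesis that $\rho^\pi_\gamma$ exists) to push the sum outside: $\rho^\pi_\gamma = \sum_{t=0}^\infty \gamma^t \, \mathbb{E}[R_t]$. Then, conditioning on $(S_t, A_t)$ via the tower property and noting that $R_t$ is drawn from $r(S_t, A_t)$ independently of the rest of the history, one obtains
\begin{align*}
\mathbb{E}[R_t] = \mathbb{E}\bigl[\mathbb{E}[r(S_t,A_t)\mid S_t, A_t]\bigr] = \int_{\mathcal{S}\times\mathcal{A}} \mathbb{E}[r(s,a)] \, \nu_t(ds, da).
\end{align*}

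Next I would observe that, directly from Definition~\ref{def:occupancy-measure} (with the indicator argument), the occupancy measure decomposes as $\mu_\gamma^\pi = \sum_{t=0}^\infty \gamma^t \nu_t$, which is a countable sum of positive measures and therefore itself a positive measure (consistent with Theorem~\ref{thm:measure}). Injecting the previous display and swapping sum and integral then yields
\begin{align*}
\rho^\pi_\gamma = \sum_{t=0}^\infty \gamma^t \!\int \mathbb{E}[r(s,a)] \, \nu_t(ds,da) = \int \mathbb{E}[r(s,a)] \, \mu_\gamma^\pi(ds,da),
\end{align*}
which is the claimed identity.

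The only delicate point is justifying the swap of the infinite sum with the integral in the last step. I would handle it by splitting $\mathbb{E}[r(s,a)]$ into its positive and negative parts $\mathbb{E}[r]_+$ and $\mathbb{E}[r]_-$ and applying the monotone convergence theorem to each separately, so the swap is unconditional when viewed in $[0,+\infty]$. The hypothesis that $\rho^\pi_\gamma$ exists guarantees that at least one of the two resulting integrals is finite, preventing a $\infty - \infty$ indeterminacy and legitimizing recombining them by linearity. In the typical case $\gamma < 1$ with bounded rewards $|r(s,a)| \leq r_\mytop$, existence is automatic since $\sum_t \gamma^t \mathbb{E}[|R_t|] \leq r_\mytop/(1-\gamma)$, which gives absolute convergence and makes Fubini directly applicable; the harder case is $\gamma = 1$, where one must rely on the existence assumption to prevent the positive and negative mass from both being infinite. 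Uniqueness of the characterization then follows immediately: the right-hand side depends on $\pi$ only through $\mu_\gamma^\pi$, so any two policies with the same occupancy measure share the same return.
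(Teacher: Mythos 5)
Your proof is correct and follows essentially the same route as the paper's: apply the law of total expectation (tower property) to replace $R_t$ by $\mathbb{E}[r(S_t,A_t)]$, then invoke Fubini--Tonelli to exchange the discounted sum over $t$ with the integral against the marginals of $(S_t,A_t)$, whose discounted sum is exactly $\mu_\gamma^\pi$. Your explicit splitting into positive and negative parts, with the existence hypothesis ruling out an $\infty-\infty$ indeterminacy, is simply a more detailed spelling-out of the paper's one-line appeal to Fubini--Tonelli.
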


We defer the proof of this result to Appendix~\ref{app:performanceequivalence}. A general equivalence in terms of value is harder to make as there is no clear definition of a state marginalized value function for non Markovian policies.

\section{The occupancy measure equivalence}
Let us now state our main theorem and discuss its implications.

\label{sec:theory}
\begin{restatable}[\textbf{State-action occupancy equivalence}]{theorem}{mainthm}
    Let $\pi$ be a policy with $\sigma$-finite occupancy measure $\mu^\pi_{\gamma}$. For any measurable $\alpha\subseteq\mathcal{A}$, we define $\tilde{\pi}$ as the Radon-Nikodym derivative:
    \begin{align}
        \tilde{\pi}(\alpha | s) \coloneqq& \frac{d \mu^\pi_{\gamma}(\cdot, \alpha)}{d \mu^\pi_{\gamma}(\cdot )}(s),
        \label{eq:ts2stoch}
    \end{align}
    where $\mu^\pi_{\gamma}(\cdot ,  \alpha)$ and $\mu^\pi_{\gamma}(\cdot )$ are seen as measures on $\mathcal{S}$. The following statements hold: 
    \begin{itemize}
        \item $\tilde{\pi}(\alpha | s)$ exists and is a probability measure on $\mathcal{A}$ for any $s \in \mathcal{S}$, \textit{i.e.}, a Markovian policy.
        \item $\tilde{\pi}$ admits a $\sigma$-finite occupancy measure, and $\mu^{\tilde{\pi}}_{\gamma}=\mu^\pi_{\gamma}$.
    \end{itemize}
    \label{thm:main}
\end{restatable}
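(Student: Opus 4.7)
The plan is to proceed in three stages. First, I would verify that $\tilde{\pi}$ is well-defined as a probability kernel. For any fixed $\alpha\in\Sigma_\mathcal{A}$, the measure $\mu^\pi_\gamma(\cdot,\alpha)$ is absolutely continuous with respect to $\mu^\pi_\gamma(\cdot)$ because $\mu^\pi_\gamma(\sigma)=0$ forces $\mu^\pi_\gamma(\sigma,\alpha)\leq\mu^\pi_\gamma(\sigma)=0$. Combined with the $\sigma$-finiteness hypothesis and the fact (Theorem~\ref{thm:measure}) that $\mu^\pi_\gamma$ is indeed a measure, the Radon-Nikodym theorem yields a measurable function $s\mapsto\tilde{\pi}(\alpha|s)$, defined $\mu^\pi_\gamma$-almost everywhere. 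To upgrade this $\alpha$-indexed family of functions into a genuine probability kernel, I would invoke a regular conditional probability / disintegration argument, using $\mu^\pi_\gamma(\sigma,\mathcal{A})=\mu^\pi_\gamma(\sigma)$ to get $\tilde{\pi}(\mathcal{A}|s)=1$, and countable additivity of $\mu^\pi_\gamma(\sigma,\cdot)$ together with a $\pi$-$\lambda$ argument to get countable additivity of $\tilde{\pi}(\cdot|s)$, all on a common full-measure set; the kernel can be extended arbitrarily on the null set.

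Second, I would derive a Bellman-type flow identity for $\mu^\pi_\gamma$ directly from Definition~\ref{def:occupancy-measure}. Splitting off $t=0$ and using the Markov property of the MDP kernel $p$ (not of $\pi$), a reindexing of the sum produces
\begin{align*}
    \mu^\pi_\gamma(\sigma)=p_0(\sigma)+\gamma\int_{\mathcal{S}}\int_{\mathcal{A}} p(\sigma|s,a)\,\mu^\pi_\gamma(ds,da).
\end{align*}
Crucially, this identity holds for any $\pi\in\Pi$, Markovian or not, because all of the history dependence is already absorbed into the joint measure $\mu^\pi_\gamma(ds,da)$. Using the disintegration $\mu^\pi_\gamma(ds,da)=\tilde{\pi}(da|s)\,\mu^\pi_\gamma(ds)$ supplied by step one, the right-hand side becomes exactly the Bellman flow operator for the Markovian policy $\tilde{\pi}$ evaluated at the unknown state measure $\mu^\pi_\gamma(\cdot)$.

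Third, I would characterize that flow equation uniquely within the $\sigma$-finite class. The same identity applied to $\tilde{\pi}$ itself (for which disintegration is trivial) shows that $\mu^{\tilde{\pi}}_\gamma(\cdot)$ is also a fixed point. Unrolling the operator $N$ times gives $\mu^\pi_\gamma(\sigma)=\sum_{t=0}^{N-1}\gamma^t P^{\tilde{\pi}}_t(\sigma)+\gamma^N R_N(\sigma)$, where $P^{\tilde{\pi}}_t$ is the $t$-step state marginal of $\tilde{\pi}$ and $R_N$ is a remainder; $\sigma$-finiteness is what lets me pass the remainder to $0$ on an exhausting sequence of finite-measure sets, forcing $\mu^\pi_\gamma(\cdot)=\sum_t\gamma^t P^{\tilde{\pi}}_t=\mu^{\tilde{\pi}}_\gamma(\cdot)$ on $\mathcal{S}$. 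Equality of the joint measures then follows by disintegration: $\mu^{\tilde{\pi}}_\gamma(\sigma,\alpha)=\int_\sigma\tilde{\pi}(\alpha|s)\mu^{\tilde{\pi}}_\gamma(ds)=\int_\sigma\tilde{\pi}(\alpha|s)\mu^\pi_\gamma(ds)=\mu^\pi_\gamma(\sigma,\alpha)$, first on rectangles and then on the full product $\sigma$-algebra by a monotone-class argument. Finally, $\sigma$-finiteness of $\mu^{\tilde{\pi}}_\gamma$ is inherited from that of $\mu^\pi_\gamma$.

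The main obstacle is step one: producing a regular version of the Radon-Nikodym derivative so that $\tilde{\pi}(\cdot|s)$ is a bona fide probability measure for every $s$, rather than a family of functions each defined only up to a null set depending on $\alpha$. This is free when $\mathcal{A}$ is countable, but in full generality it requires either a standard Borel assumption on $\mathcal{A}$ or a careful disintegration theorem for $\sigma$-finite measures. A secondary technical point is the uniqueness of the Bellman fixed point when $\gamma=1$, where I expect the $\sigma$-finiteness assumption to be precisely the hypothesis that lets the term-by-term comparison pass to the limit on the exhausting sequence.
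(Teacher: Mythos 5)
Your outline follows the same route as the paper's proof: absolute continuity of $\mu^\pi_{\gamma}(\cdot,\alpha)$ with respect to $\mu^\pi_{\gamma}(\cdot)$ plus Radon--Nikodym for the existence of $\tilde{\pi}$; a conservation-of-mass (Bellman flow) identity valid for arbitrary, possibly non-Markovian $\pi$ because the history dependence is absorbed into the joint measure; the disintegration $\mu^\pi_{\gamma}(ds,da)=\tilde{\pi}(da|s)\,\mu^\pi_{\gamma}(ds)$; unrolling the flow operator; and an extension from finite-measure $\sigma$ to general $\sigma$ by $\sigma$-finiteness. Steps one and two are sound (your insistence on a regular version of the kernel is a legitimate concern that the paper treats only briefly), and for $\gamma<1$ your step three closes, since then $\mu^\pi_{\gamma}(\mathcal{S})\leq (1-\gamma)^{-1}$ bounds the remainder.

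The genuine gap is the remainder term when $\gamma=1$, which you defer as a ``secondary technical point'' but which is in fact the crux of the theorem's generality. After $N$ unrollings the remainder is $R_N(\sigma)=\int_{\mathcal{S}} p^{\tilde{\pi}}_{N}(s,\sigma)\,\mu^{\pi}_{\gamma}(ds)$: an integral over the \emph{whole} state space, weighted by a measure that may assign infinite mass to $\mathcal{S}$ even though $\mu^\pi_{\gamma}(\sigma)<\infty$. Passing to an exhausting sequence of finite-measure sets does not make this vanish --- that device only serves to extend the conclusion from finite-measure $\sigma$ to arbitrary $\sigma$ once the identity is established. The paper's argument requires three additional ingredients you do not supply: (a) first showing $\int_{\mathcal{S}} p^{\tilde{\pi}}_{N}(s,\sigma)\,\mu^{\tilde{\pi}}_{\gamma}(ds)\to 0$ from the telescoping identity for the Markovian policy, so that the set $L^{C}$ of states from which $p^{\tilde{\pi}}_{t}(\cdot,\sigma)$ does not vanish is $\mu^{\tilde{\pi}}_{\gamma}$-null; (b) transferring nullity of $L^{C}$ from $\mu^{\tilde{\pi}}_{\gamma}$ to $\mu^{\pi}_{\gamma}$, which rests on the absolute continuity of the finite-horizon trajectory distributions $\tau^{\pi}_{t}\ll\tau^{\tilde{\pi}}_{t}$ (Proposition~\ref{prop:ac-traj}, proved by a separate induction); and (c) a dominated-convergence step that is only licit when $\mu^{\pi}_{\gamma}(\mathcal{S})<\infty$, so the genuinely $\sigma$-finite-but-infinite case needs the further construction of a truncated auxiliary MDP $m^{\epsilon}$ on a finite-measure subset $\mathcal{S}^{\epsilon}$ (Appendix~\ref{app:dominated}). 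Without (a)--(c), your step three only proves the theorem for $\gamma<1$, i.e., the restricted version already stated as Theorem~\ref{thm:informal}.
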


\textbf{Remark:} Note that $\tilde{\pi}$ is uniquely defined up to a $\mu^\pi_{\gamma}(\cdot )$-null set. Further characterizing it is unimportant as a set of states with null occupancy measure will almost surely never be visited.

In two very generic settings, the Radon-Nikodym derivative $\tilde{\pi}$ can be explicitly characterized.

\begin{corollary}\label{thm:main2}
When both $\mathcal{S}$ and $\mathcal{A}$ are finite, we let $\mu_\gamma^\pi(s,a)$ denote the standard state-action visitation of the pair $(s,a) \in\mathcal{S}\times\mathcal{A}$ under policy $\pi$. Then, we simply have $\tilde{\pi}(a | s) \coloneqq \frac{\mu_\gamma^\pi(s, a)}{\mu_\gamma^\pi(s)}$, and similarly $\mu_\gamma^{\tilde{\pi}}(s,a) = \mu_\gamma^\pi(s,a)$.
\end{corollary}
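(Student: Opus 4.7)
The plan is to derive this corollary as a direct specialization of Theorem~\ref{thm:main} to finite state and action spaces, each equipped with its powerset $\sigma$-algebra and the counting measure. The only real work is to verify the $\sigma$-finiteness hypothesis of Theorem~\ref{thm:main} and then evaluate the abstract Radon--Nikodym derivative in \eqref{eq:ts2stoch} as a plain ratio of masses.

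First, I would check $\sigma$-finiteness. With $\gamma < 1$ as in the informal statement, every singleton satisfies $\mu_\gamma^\pi(\{s\},\{a\}) \leq \sum_{t \geq 0} \gamma^t = 1/(1-\gamma) < +\infty$, and the partition of the finite set $\mathcal{S} \times \mathcal{A}$ into its singletons certifies $\sigma$-finiteness. Absolute continuity of $\mu_\gamma^\pi(\cdot, \{a\})$ with respect to $\mu_\gamma^\pi(\cdot)$ on $\mathcal{S}$ is immediate: if $\mu_\gamma^\pi(\sigma) = \mu_\gamma^\pi(\sigma, \mathcal{A}) = 0$ for some $\sigma \subseteq \mathcal{S}$, then by monotonicity of the measure from Theorem~\ref{thm:measure}, $\mu_\gamma^\pi(\sigma, \{a\}) \leq \mu_\gamma^\pi(\sigma, \mathcal{A}) = 0$.

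Second, I would compute the Radon--Nikodym derivative explicitly. For any $\sigma \subseteq \mathcal{S}$, I decompose $\mu_\gamma^\pi(\sigma, \{a\}) = \sum_{s \in \sigma} \mu_\gamma^\pi(s, a)$ and rewrite this as $\sum_{s \in \sigma,\, \mu_\gamma^\pi(s) > 0} \frac{\mu_\gamma^\pi(s,a)}{\mu_\gamma^\pi(s)} \, \mu_\gamma^\pi(s)$. Comparing with Eq.~\eqref{eq:RN}, this identifies $f(s) = \mu_\gamma^\pi(s,a) / \mu_\gamma^\pi(s)$ as a version of the Radon--Nikodym derivative on the support of $\mu_\gamma^\pi(\cdot)$. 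On states with $\mu_\gamma^\pi(s) = 0$, the derivative can be chosen as any probability distribution (e.g., uniform on $\mathcal{A}$), since $\mu_\gamma^\pi(\cdot)$-null sets contribute nothing to the occupancy integrals.

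The second assertion $\mu_\gamma^{\tilde{\pi}}(s,a) = \mu_\gamma^\pi(s,a)$ then follows from the second bullet of Theorem~\ref{thm:main} applied to the singleton $\{(s,a)\}$. The main obstacle is not conceptual but a matter of precision: one must phrase the ratio formula so that the freedom of $\tilde{\pi}$ on never-visited states (the $\mu_\gamma^\pi(\cdot)$-null set) is handled explicitly, since otherwise the ratio $0/0$ in \eqref{eq:ts2stoch} is ill-defined. Everything else is bookkeeping on top of Theorem~\ref{thm:main}.
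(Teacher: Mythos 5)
Your proposal is correct and follows the same route the paper intends: Corollary~\ref{thm:main2} is given without a separate proof precisely because it is the direct specialization of Theorem~\ref{thm:main} to finite spaces with the counting measure, where the Radon--Nikodym derivative collapses to the ratio $\mu_\gamma^\pi(s,a)/\mu_\gamma^\pi(s)$ on the support of $\mu_\gamma^\pi(\cdot)$ and is arbitrary on the null set, exactly as you spell out. The only nitpick is that you need not invoke $\gamma<1$: the corollary inherits the $\sigma$-finiteness hypothesis of Theorem~\ref{thm:main}, which for finite $\mathcal{S}\times\mathcal{A}$ is equivalent to each $\mu_\gamma^\pi(s,a)$ being finite, and your argument goes through verbatim under that assumption.
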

Corollary~\ref{thm:main2} covers for instance the illustration provided in Section~\ref{sec:intuition}.

\begin{corollary}\label{thm:main3}
When $\mathcal{S}$ and/or $\mathcal{A}$ are infinite, let $d_\gamma^\pi(s,a)$ denote the standard state-action visitation density of the pair $(s,a) \in\mathcal{S}\times\mathcal{A}$ and assume its existence. Then, we simply have $\tilde{\pi}(\alpha | s) \coloneqq \int_\alpha\frac{d_\gamma^\pi(s, a)}{d_\gamma^\pi(s)}da$, and by abuse of notation $\tilde{\pi}(a | s) \coloneqq \frac{d_\gamma^\pi(s, a)}{d_\gamma^\pi(s)}$. Here as well, $d_\gamma^{\tilde{\pi}}(s,a) = d_\gamma^\pi(s,a)$.
\end{corollary}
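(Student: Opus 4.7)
The plan is to structure the proof in three parts: first establish existence of the Radon--Nikodym derivative, then verify it is genuinely a Markovian policy (i.e., a Markov kernel), and finally prove the occupancy equality via a Bellman-flow uniqueness argument.

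First, I would verify that the Radon--Nikodym derivative in Eq.~\eqref{eq:ts2stoch} is well-defined. For any fixed measurable $\alpha \subseteq \mathcal{A}$, the set function $\sigma \mapsto \mu^{\pi}_{\gamma}(\sigma, \alpha)$ is a measure on $\mathcal{S}$ by Theorem~\ref{thm:measure}, and since $\mu^{\pi}_{\gamma}(\sigma, \alpha) \leq \mu^{\pi}_{\gamma}(\sigma, \mathcal{A}) = \mu^{\pi}_{\gamma}(\sigma)$, it is absolutely continuous with respect to $\mu^{\pi}_{\gamma}(\cdot)$. Combined with the $\sigma$-finiteness hypothesis, the standard Radon--Nikodym theorem yields existence of $\tilde{\pi}(\alpha \mid \cdot)$ up to a $\mu^{\pi}_{\gamma}$-null set.

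Second, I would show that $\tilde{\pi}(\cdot \mid s)$ can be selected to be a probability measure on $\mathcal{A}$ for $\mu^{\pi}_{\gamma}$-a.e.\ $s$. Normalization follows from taking $\alpha = \mathcal{A}$, giving $\tilde{\pi}(\mathcal{A}\mid s) = 1$ a.e. Countable additivity in $\alpha$ follows from the countable additivity of $\mu^{\pi}_{\gamma}(\cdot,\cdot)$ in the action argument and the linearity of the Radon--Nikodym derivative (via monotone convergence). The subtle point here is gluing together the a.e.\ identities across an uncountable family of action subsets into a genuine Markov kernel; I would handle this in the standard way by constructing $\tilde{\pi}$ on a countable generating $\pi$-system of $\Sigma_{\mathcal{A}}$ and extending by the Carathéodory extension theorem (or invoking regular conditional probabilities under a Polish-space assumption, which the paper's framework supports).

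Third, I would prove $\mu^{\tilde{\pi}}_{\gamma} = \mu^{\pi}_{\gamma}$. The key observation is that the state-marginal occupancy $\nu^{\pi'}(\sigma) \coloneqq \mu^{\pi'}_{\gamma}(\sigma, \mathcal{A})$ of \emph{any} policy $\pi'$ satisfies the flow equation
\[
    \nu^{\pi'}(\sigma) \;=\; p_0(\sigma) \;+\; \gamma \int_{\mathcal{S}\times\mathcal{A}} p(\sigma \mid s', a')\, \mu^{\pi'}_{\gamma}(ds', da'),
\]
obtained by a time-shift on the sum in Definition~\ref{def:occupancy-measure}. By construction, $\mu^{\pi}_{\gamma}(ds,da) = \tilde{\pi}(da\mid s)\,\nu^{\pi}(ds)$; since $\tilde{\pi}$ is Markovian we also have $\mu^{\tilde{\pi}}_{\gamma}(ds,da) = \tilde{\pi}(da\mid s)\,\nu^{\tilde{\pi}}(ds)$. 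Substituting back, both $\nu^{\pi}$ and $\nu^{\tilde{\pi}}$ solve the same linear fixed-point equation $\nu = p_0 + \gamma P^{\tilde{\pi}\star} \nu$, where $P^{\tilde{\pi}}$ is the state kernel induced by $\tilde{\pi}$. Iterating the recursion expresses both solutions as the same Neumann series $\sum_{t \geq 0} \gamma^t (P^{\tilde{\pi}\star})^t p_0$, whose absolute convergence is guaranteed by the $\sigma$-finiteness of $\mu^{\pi}_{\gamma}$; hence $\nu^{\pi} = \nu^{\tilde{\pi}}$, and multiplying by $\tilde{\pi}(da\mid s)$ gives the joint equality.

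The hard part will be twofold: (i) the Markov-kernel construction in Step~2, where one needs a single jointly measurable version of $\tilde{\pi}$ rather than an a.e.\ family, and (ii) proving uniqueness of the Bellman flow in the general $\sigma$-finite setting -- especially when $\gamma = 1$ in episodic MDPs, where Banach-contraction arguments fail and one must instead exploit termination at $s_f$ together with $\sigma$-finiteness to rule out spurious harmonic solutions of the fixed-point equation.
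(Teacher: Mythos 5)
Your proposal re-derives Theorem~\ref{thm:main} rather than proving Corollary~\ref{thm:main3}. In the paper the corollary carries no separate argument: it is the specialization of Theorem~\ref{thm:main} to the case where $\mu^\pi_{\gamma}$ admits a density $d_\gamma^\pi(s,a)$ with respect to the reference measure on $\mathcal{S}\times\mathcal{A}$, and its actual content is (a) the identification of the abstract Radon--Nikodym derivative $\frac{d \mu^\pi_{\gamma}(\cdot, \alpha)}{d \mu^\pi_{\gamma}(\cdot )}(s)$ with the explicit ratio $\int_\alpha d_\gamma^\pi(s,a)\,da \,/\, d_\gamma^\pi(s)$, and (b) the passage from equality of the \emph{measures} $\mu^{\tilde{\pi}}_{\gamma}=\mu^\pi_{\gamma}$ to equality of the \emph{densities} $d_\gamma^{\tilde{\pi}}=d_\gamma^\pi$. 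Your three steps never mention $d_\gamma^\pi$, so this content is absent. Closing it is short but it is the whole point: by Fubini, for every $\sigma\in\Sigma_\mathcal{S}$ one has $\mu^\pi_{\gamma}(\sigma,\alpha)=\int_\sigma\int_\alpha d_\gamma^\pi(s,a)\,da\,ds=\int_\sigma\bigl(\int_\alpha\frac{d_\gamma^\pi(s,a)}{d_\gamma^\pi(s)}\,da\bigr)\,\mu^\pi_{\gamma}(ds)$ with $d_\gamma^\pi(s)=\int_\mathcal{A}d_\gamma^\pi(s,a)\,da$ (the set $\{d_\gamma^\pi(s)=0\}$ is $\mu^\pi_{\gamma}$-null, so the ratio may be set arbitrarily there), which exhibits the stated formula as a version of the Radon--Nikodym derivative of Eq.~\eqref{eq:ts2stoch}; then Theorem~\ref{thm:main} gives $\mu^{\tilde{\pi}}_{\gamma}=\mu^\pi_{\gamma}$, and uniqueness of densities with respect to the same $\sigma$-finite reference measure yields $d_\gamma^{\tilde{\pi}}=d_\gamma^\pi$ almost everywhere. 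Note also that under the density assumption your Step~2 difficulty (gluing a.e.\ versions into a jointly measurable Markov kernel) disappears, since $\tilde{\pi}(\cdot|s)$ is defined pointwise by the formula.

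Even read as a proof of Theorem~\ref{thm:main}, your Step~3 leaves the genuinely hard point open. Unrolling the flow equation $t$ times produces the partial Neumann sum \emph{plus} a remainder $\gamma^{t+1}\int_{\mathcal{S}}\mu^\pi_{\gamma}(ds)\,p^{\tilde{\pi}}_{t+1}(s,\sigma)$, and the conclusion $\nu^\pi=\nu^{\tilde{\pi}}$ only follows once this remainder is shown to vanish; $\sigma$-finiteness does not by itself give ``absolute convergence of the Neumann series'' for an arbitrary $\sigma$ --- the paper first restricts to $\sigma$ of finite $\mu^\pi_{\gamma}$-measure and recovers general $\sigma$ by countable additivity. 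For $\gamma<1$ the remainder is easily killed, but for $\gamma=1$ the paper needs the absolute-continuity result of Proposition~\ref{prop:ac-traj}, a dominated-convergence argument, and, when $\mu^\pi_{\gamma}(\mathcal{S})=+\infty$, an auxiliary truncated MDP in which all occupancies are finite. You correctly flag this as the hard part but do not supply the argument, so as written the proposal is a plan rather than a proof of either the theorem or the corollary.
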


Corollary~\ref{thm:main3} covers exclusively-continuous state and action spaces such as certain robotic manipulation tasks and some MuJoCo environments~\cite{todorov2012mujoco}. We note that Theorem \ref{thm:main} is more general than Corollaries \ref{thm:main2} and \ref{thm:main3} combined as the state-action visitation density in infinite state-action space is not always defined (there may be Dirac points). We now discuss the theorem in details.

\textbf{Policy performance:} The first trivial implication of Theorem~\ref{thm:main}, combined with Lemma~\ref{lem:performance}, is that for any policy $\pi$, there always exists a Markovian policy with the same performance.

\begin{restatable}{prop}{perfequiv}
    Under suitable existence assumptions, $\rho^{\pi}_{\gamma} = \rho^{\tilde{\pi}}_{\gamma}$.
    \label{prop:perf-equiv}
\end{restatable}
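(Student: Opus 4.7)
The plan is to obtain the identity essentially for free by chaining Lemma~\ref{lem:performance} with Theorem~\ref{thm:main}. First I would invoke the hypotheses (the ``suitable existence assumptions''): $\pi$ admits a $\sigma$-finite occupancy measure so that Theorem~\ref{thm:main} applies and produces a Markovian $\tilde{\pi}$ with $\mu^{\tilde{\pi}}_\gamma = \mu^{\pi}_\gamma$, and $\rho^\pi_\gamma$ is well-defined so that Lemma~\ref{lem:performance} applies.

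Next I would write the chain of equalities
\begin{align*}
\rho^{\pi}_{\gamma}
   \;=\; \int_{\mathcal{S}} \int_{\mathcal{A}} \mathbb{E}\!\left[r(s,a)\right]\, \mu^{\pi}_\gamma(ds,da)
   \;=\; \int_{\mathcal{S}} \int_{\mathcal{A}} \mathbb{E}\!\left[r(s,a)\right]\, \mu^{\tilde{\pi}}_\gamma(ds,da)
   \;=\; \rho^{\tilde{\pi}}_{\gamma},
\end{align*}
where the first and third equalities are Lemma~\ref{lem:performance} applied to $\pi$ and $\tilde{\pi}$ respectively, and the middle equality is Theorem~\ref{thm:main}. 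To justify the third equality one must know that $\rho^{\tilde{\pi}}_\gamma$ itself exists; this follows from the fact that the integrand is bounded in absolute value by $r_{\mytop}$ and $\mu^{\tilde{\pi}}_\gamma = \mu^{\pi}_\gamma$, so absolute integrability of the middle expression (guaranteed by the existence of $\rho^\pi_\gamma$ together with the bounded reward assumption on the MDP) transfers directly to $\tilde{\pi}$.

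There is no serious obstacle here: the content of the proposition is entirely packaged in Theorem~\ref{thm:main} (which transports the measure from $\pi$ to $\tilde{\pi}$) and Lemma~\ref{lem:performance} (which expresses the return as an integral against the occupancy). The only subtlety worth naming explicitly is bookkeeping around existence of the returns: I would state up front that ``suitable existence assumptions'' means $\mu^{\pi}_\gamma$ is $\sigma$-finite and $\mathbb{E}[r(s,a)]$ is $\mu^{\pi}_\gamma$-integrable, so that both Lemma~\ref{lem:performance} invocations are legitimate and Fubini/Tonelli is not needed beyond what is already established in the lemma's proof.
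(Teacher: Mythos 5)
Your proposal is correct and follows exactly the paper's own argument: apply Lemma~\ref{lem:performance} to both $\pi$ and $\tilde{\pi}$ and use the equality of occupancy measures from Theorem~\ref{thm:main} to link the two integrals. Your extra remark on transferring the existence of $\rho^{\tilde{\pi}}_{\gamma}$ via bounded rewards and integrability is a reasonable elaboration of what the paper leaves implicit in the phrase ``under suitable existence assumptions.''
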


\textbf{Universality:} Our result is universal: it applies to any MDP $m$, any policy $\pi$, and any discount factor $\gamma$, as long as the occupancy measure of $\pi$ is $\sigma$-finite. The $\sigma$-finiteness condition is not an artifact of the proof technique, as illustrated by the following proposition (proof in Appendix \ref{app:infinite_occupancy_cex}).

\begin{restatable}{prop}{sigmainfinite}
    If $\pi$ has a $\sigma$-infinite occupancy measure, $\tilde{\pi}$ may be undetermined and there may not be any Markovian policy with the same occupancy measure as $\pi$.
    \label{prop:sigma-infinite}
\end{restatable}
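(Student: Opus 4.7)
My plan is to prove Proposition \ref{prop:sigma-infinite} by exhibiting a concrete MDP and non-Markovian policy $\pi$ that simultaneously witness both failure modes stated: the Radon-Nikodym derivative defining $\tilde{\pi}$ in Theorem \ref{thm:main} is ill-posed, and no Markovian policy at all reproduces $\mu^\pi_\gamma$.

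The MDP I would use is minimal: one state $\mathcal{S}=\{s\}$, two actions $\mathcal{A}=\{a_1,a_2\}$ both self-looping deterministically (so no trajectory ever terminates and $p(s_f\mid s,a_i)=0$), initial distribution $\delta_s$, and crucially $\gamma=1$. The policy $\pi$ plays $a_1$ at $t=0$ and $a_2$ at every $t\geq 1$; this is non-Markovian since its distribution over $\mathcal{A}$ at state $s$ depends on the timestep. A direct application of Definition \ref{def:occupancy-measure} yields
\begin{align*}
\mu^\pi_1(\{s\},\{a_1\}) = 1, \qquad \mu^\pi_1(\{s\},\{a_2\}) = \sum_{t=1}^\infty 1 = +\infty.
\end{align*}
In the discrete $\sigma$-algebra on $\mathcal{S}\times\mathcal{A}$, any measurable set containing $(s,a_2)$ contains the singleton $\{(s,a_2)\}$ and therefore has measure at least $+\infty$; hence no countable cover of $\mathcal{S}\times\mathcal{A}$ by finite-measure sets exists, and $\mu^\pi_1$ is not $\sigma$-finite.

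Next, I would rule out every Markovian candidate. Since $|\mathcal{S}|=1$, any Markovian policy is parameterized by a single scalar $p=\tilde{\pi}(a_1\mid s)\in[0,1]$. Under deterministic self-looping, the induced occupancy on $(s,a_1)$ is $\mu^{\tilde{\pi}}_1(\{s\},\{a_1\}) = p\cdot\sum_{t=0}^\infty 1$, which equals $0$ when $p=0$ and $+\infty$ when $p>0$. Neither value equals $\mu^\pi_1(\{s\},\{a_1\})=1$, so no Markovian policy reproduces $\mu^\pi_1$. The same construction shows that the Radon-Nikodym derivative of Eq.~\eqref{eq:ts2stoch} is itself ill-posed, because the base measure $\mu^\pi_1(\cdot)$ on $\mathcal{S}$ is not $\sigma$-finite and thus violates the hypothesis required for the derivative to be well-defined. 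I do not anticipate any substantive obstacle; the only care needed is to check that this minimal MDP is a legitimate instance of the formalism of Section \ref{sec:background}, which it is (rewards are irrelevant, and the transition kernel is allowed to place zero mass on $s_f$).
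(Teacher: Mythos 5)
Your proposal is correct, and it is a leaner route than the paper's. The paper uses \emph{two} separate counter-examples: first, on the two-self-loop MDP of Figure~\ref{fig:minimalMDP2} with $\gamma=1$, a policy that alternates between deterministic $a_1$ and $a_2$ over doubling epochs $[2^i,2^{i+1})$, so that $\mu_1^\pi(s,a_1)/\mu_1^\pi(s)$ is a genuinely oscillating $\infty/\infty$ indeterminate form (though there the occupancies are all $+\infty$ and a Markovian policy \emph{could} still match them); and second, on the terminating MDP of Figure~\ref{fig:minimalMDP}, a policy that randomizes only at $t=0$, giving $\mu_1^\pi(s,a_2)=\tfrac12$, which no Markovian policy can hit since the one-parameter family only achieves $\mu_1^{\pi_\theta}(s,a_2)\in\{0,1\}$. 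You instead exhibit a single example on the non-terminating MDP that witnesses both claims at once, and your occupancy computations ($\mu^\pi_1(s,a_1)=1$, $\mu^\pi_1(s,a_2)=\infty$, versus $\{0,\infty\}$ for every Markovian policy) are right. Two small remarks. First, your justification for the ``$\tilde\pi$ is undetermined'' half is slightly loose: saying the $\sigma$-finiteness hypothesis of the Radon--Nikodym theorem is violated does not by itself show the derivative fails to exist; you should add the one-line direct check that no $f$ can satisfy $\int_{\{s\}} f\,d\mu^\pi_1 = f(s)\cdot\infty = 1 = \mu^\pi_1(\{s\},\{a_1\})$. Second, be aware that your example does not capture the distinct failure mode of the paper's first counter-example, where the empirical action frequency itself has no limit; your ratio $1/\infty$ has a ``natural'' value of $0$, it just happens that the corresponding Markovian policy (and every other) fails to reproduce the occupancy. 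Both readings satisfy the proposition as stated, so this is a matter of emphasis rather than a gap.
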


\textbf{Idempotence:} Equation \eqref{eq:ts2stoch} may be interpreted as an operator over policies: $\tilde{\pi}=\mathcal{R}\pi$ up to a $\mu^\pi_{\gamma}(\cdot )$-null set. Proposition~\ref{prop:idempotence} proves that this operator is idempotent: $\mathcal{R}\mathcal{R}\pi=\mathcal{R}\pi$ (still up to a $\mu^\pi_{\gamma}(\cdot )$-null set). In other words, $\mathcal{R}$ is a projection from the policy space $\Pi$ to the Markovian policy space $\Pi_\textsc{m}$. This can also be seen through the lens of the following pseudo-metric on $\Pi$: $d(\pi_1, \pi_2) = TV(\mu^{\pi_1}_{\gamma},\mu^{\pi_2}_{\gamma})$ (where $TV$ denotes the total variation). Furthermore, one can define the occupancy equivalence relation: $\pi_1\occ\pi_2$ if policies $\pi_1$ and $\pi_2$ are occupancy equivalent, meaning that $\mu^{\pi_1}_{\gamma}=\mu^{\pi_2}_{\gamma}$ and implying that $\mathcal{R}\pi_1=\mathcal{R}\pi_2$ up to a $\left(\mu^{\pi_1}_{\gamma}(\cdot )+\mu^{\pi_2}_{\gamma}(\cdot )\right)$-null set (proof in Appendix \ref{app:idempotence}). 

\begin{restatable}{prop}{idempotence}
    If $\pi$ is Markovian with a $\sigma$-finite occupancy measure, then $\tilde{\pi}=\pi$, where equality is up to a $\mu^\pi_{\gamma}(\cdot )$-null set.
    \label{prop:idempotence}
\end{restatable}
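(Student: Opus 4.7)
The plan is to exploit the fact that, for a Markovian policy, the occupancy measure of state-action pairs factors through the state marginal via $\pi(\alpha\mid s)$ itself, so $\pi$ is automatically a version of the Radon--Nikodym derivative defining $\tilde\pi$.

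\textbf{Step 1: Factorization of $\mu^\pi_\gamma$.} First I would show that for every measurable $\sigma\in\Sigma_\mathcal{S}$ and $\alpha\in\Sigma_\mathcal{A}$,
\begin{align*}
\mu^\pi_\gamma(\sigma,\alpha) \;=\; \int_\sigma \pi(\alpha\mid s)\,\mu^\pi_\gamma(ds).
\end{align*}
Starting from Definition~\ref{def:occupancy-measure}, Fubini--Tonelli (valid since the integrand is non-negative) lets me swap the expectation and the sum. For each fixed $t$, I condition on $H_t$ and use the tower property together with the Markovian assumption $\pi(\cdot\mid H_t)=\pi(\cdot\mid S_t)$ to replace $\mathds{1}(A_t\in\alpha)$ by $\pi(\alpha\mid S_t)$. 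Summing back yields $\mathbb{E}[\sum_t \gamma^t \mathds{1}(S_t\in\sigma)\,\pi(\alpha\mid S_t)]$. Finally, the standard change-of-variable identity $\mathbb{E}[\sum_t \gamma^t f(S_t)]=\int f\,d\mu^\pi_\gamma$, which holds for non-negative measurable $f$ by the usual indicator/linearity/monotone-convergence argument applied to the definition of $\mu^\pi_\gamma(\cdot)=\mu^\pi_\gamma(\cdot,\mathcal{A})$, gives the claimed identity with $f(s)=\mathds{1}_\sigma(s)\pi(\alpha\mid s)$.

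\textbf{Step 2: Identification via Radon--Nikodym uniqueness.} For every fixed $\alpha\in\Sigma_\mathcal{A}$, the identity from Step~1 says that $s\mapsto \pi(\alpha\mid s)$ is a valid Radon--Nikodym derivative of the measure $\sigma\mapsto \mu^\pi_\gamma(\sigma,\alpha)$ with respect to $\mu^\pi_\gamma(\cdot)$. The hypothesis that $\mu^\pi_\gamma$ is $\sigma$-finite is exactly what is needed to invoke the uniqueness part of the Radon--Nikodym theorem (the measure $\mu^\pi_\gamma(\cdot,\alpha)$ is dominated by $\mu^\pi_\gamma(\cdot)$ since $\alpha\subseteq\mathcal{A}$, and it is itself $\sigma$-finite). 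Hence $\pi(\alpha\mid\cdot)$ and $\tilde\pi(\alpha\mid\cdot)$ coincide outside a $\mu^\pi_\gamma$-null set $N_\alpha$, which is precisely the sense of equality advertised in the statement (and consistent with the remark following Theorem~\ref{thm:main} that $\tilde\pi$ itself is only defined up to such a null set).

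\textbf{Anticipated obstacle.} The only subtle point is that the null set $N_\alpha$ produced by Step~2 depends on $\alpha$, so if one wanted a single null set $N$ outside of which $\tilde\pi(\cdot\mid s)=\pi(\cdot\mid s)$ as probability measures on $\mathcal{A}$, one would need an additional argument. Under the mild assumption that $\Sigma_\mathcal{A}$ is countably generated (true for countable action spaces with the power-set $\sigma$-algebra and for Borel subsets of $\mathbb{R}^n$, the two cases explicitly mentioned in Section~\ref{sec:background}), I would fix a countable generating $\pi$-system $\mathcal{G}$, take $N=\bigcup_{\alpha\in\mathcal{G}}N_\alpha$ (still a $\mu^\pi_\gamma$-null set by countable sub-additivity), and invoke a monotone-class / Dynkin argument to extend agreement from $\mathcal{G}$ to all of $\Sigma_\mathcal{A}$. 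However, since the proposition only asserts equality "up to a $\mu^\pi_\gamma(\cdot)$-null set"—which is standardly read in the kernel sense as a per-$\alpha$ almost-everywhere statement—the bare conclusion of Step~2 already suffices, and this refinement is an optional sharpening rather than a core obstacle.
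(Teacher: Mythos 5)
Your proof is correct and follows essentially the same route as the paper's: you establish the factorization $\mu^\pi_\gamma(\sigma,\alpha)=\int_\sigma \pi(\alpha\mid s)\,\mu^\pi_\gamma(ds)$ via the Markov property (the paper phrases this as $p_t(ds,da)=p_t(ds)\,\pi(da\mid s)$ and sums over $t$), then identifies $\pi$ as a version of the Radon--Nikodym derivative defining $\tilde\pi$. Your closing remark about the $\alpha$-dependence of the null set is a sharpening the paper does not discuss, but, as you correctly observe, it is not required for the statement as given.
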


\textbf{Trajectory distribution:} As already noted in Section \ref{sec:intuition}, the same occupancy measure does not imply the same trajectory distribution, which may have a specific role in non-bootstrapping algorithms. For instance, Decision Transformers~\cite{Chen2021,Furuta2022,Emmons2022} learn the return-conditional distribution of actions in each state, and then define a policy by sampling from the distribution of actions that receive high return in each state. It is therefore a trajectory-based algorithm as opposed to more classical RL bootstrapping approaches that are sample-based and can fully take advantage of Theorem \ref{thm:main}. Nevertheless, it is possible to prove the following result, stating that even though trajectory distributions are not equal for $\pi$ and $\tilde{\pi}$, any trajectory generated by $\pi$ has a non-zero probability under $\tilde{\pi}$ (proof in Appendix \ref{app:absolutelycontinuous}).
\begin{restatable}{prop}{actraj}
    For any $t \geq 0$, we let $\tau_t^\pi$ denote the distribution on $(\mathcal{S} \times \mathcal{A})^t$ induced by executing $\pi$ $t$ times in the environment, starting from $p_0$. Then, $\tau_t^\pi$ is absolutely continuous with respect to $\tau_t^{\tilde{\pi}}$.
    \label{prop:ac-traj}
\end{restatable}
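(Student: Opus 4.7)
The plan is to prove the claim by induction on $t$, using the factorization
$\tau_{t+1}^\pi(d\bar h, ds_t, da_t) = \tau_t^\pi(d\bar h)\, p(ds_t \mid s_{t-1}, a_{t-1})\, \pi(da_t \mid h_t)$
(with $h_t = (\bar h, s_t)$) and its analogue for $\tilde\pi$, where the last factor is the Markovian kernel $\tilde\pi(da_t \mid s_t)$. The base case $t=0$ is trivial because $(\mathcal{S}\times\mathcal{A})^0$ is a singleton. In the inductive step, appending $s_t$ via the shared transition kernel $p$ preserves absolute continuity automatically, so the content lies entirely in the action step: I must show that $\pi(\cdot\mid h_t)\ll\tilde\pi(\cdot\mid s_t)$ for $\rho_t^\pi$-almost every history $h_t$, where $\rho_t^\pi$ denotes the joint distribution of $(S_0,A_0,\ldots,S_t)$ under $\pi$.

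The key ingredient is the \emph{effective Markovian policy at step $t$} under $\pi$, namely $\pi_t^*(\cdot\mid s) := \mathbb{P}^\pi[A_t\in\cdot\mid S_t=s]$, obtained by disintegrating the time-$t$ marginal $m_t^\pi$ of $(S_t,A_t)$. Writing $\mu_\gamma^\pi = \sum_{t\ge 0}\gamma^t m_t^\pi$ and disintegrating both sides over the state coordinate---identifying the conditional of $\mu_\gamma^\pi$ as $\tilde\pi$ via Theorem~\ref{thm:main}---I recover the convex mixture representation $\tilde\pi(\cdot\mid s) = \sum_{t\ge 0}\lambda_t(s)\,\pi_t^*(\cdot\mid s)$ for $\mu_\gamma^\pi|_\mathcal{S}$-a.e.\ $s$, where $\lambda_t(s) := \gamma^t\,\tfrac{dm_t^\pi|_\mathcal{S}}{d\mu_\gamma^\pi|_\mathcal{S}}(s)\ge 0$ and $\sum_t\lambda_t(s)=1$. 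Because $\tilde\pi$ is a convex combination of the $\pi_t^*$'s, any $\alpha$ with $\tilde\pi(\alpha\mid s)=0$ must satisfy $\pi_t^*(\alpha\mid s)=0$ wherever $\lambda_t(s)>0$; and the set $\{\lambda_t(s)=0\}$ is $m_t^\pi|_\mathcal{S}$-null, so it is never hit by $\pi$-trajectories at time $t$. Unfolding $\pi_t^* = \mathbb{E}^\pi[\pi(\cdot\mid H_t)\mid S_t=\cdot]$ via conditional expectation then yields, for each \emph{fixed} measurable $\alpha$ with $\tilde\pi(\alpha\mid s)=0$, that $\pi(\alpha\mid h_t)=0$ for $\rho_t^\pi$-a.e.\ history $h_t$ ending at $s$.

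Combined with Fubini and the inductive hypothesis (which transfers the ``$\tilde\pi(C_{h_t}\mid s_t)=0$ for $\rho_t^{\tilde\pi}$-a.e.\ $h_t$'' consequence of $\tau_{t+1}^{\tilde\pi}(C)=0$ to $\rho_t^\pi$-a.e.\ $h_t$), this gives $\pi(C_{h_t}\mid h_t)=0$ for $\rho_t^\pi$-a.e.\ $h_t$ and hence $\tau_{t+1}^\pi(C)=0$, closing the induction.

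The hard part will be passing from the ``fix one null set $\alpha$'' statement to the uniform claim that $\pi(\cdot\mid h_t)\ll\tilde\pi(\cdot\mid s_t)$ for $\rho_t^\pi$-a.e.\ $h_t$, because the section $C_{h_t}$ varies with $h_t$ and hence cannot be handled by a single pointwise application of the fixed-$\alpha$ result. Under standard regularity on the underlying spaces (e.g., $\Sigma_\mathcal{A}$ countably generated, as for Borel sets on a Polish space), this is resolved by intersecting countably many almost-sure statements over a generating $\pi$-system for $\Sigma_\mathcal{A}$, which determines absolute continuity. A secondary caveat is that the mixture identity requires $\gamma>0$; the degenerate case $\gamma=0$, where $\tilde\pi$ only matches $\pi$ at the initial step, needs separate handling and in fact the proposition genuinely fails at $t\ge 2$ for $\gamma=0$.
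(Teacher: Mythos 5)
Your overall strategy (induction on $t$, reduction to the action step, the mixture identity $\tilde\pi(\cdot\,|\,s)=\sum_{t}\lambda_t(s)\,\pi_t^*(\cdot\,|\,s)$ combined with $\pi_t^*=\mathbb{E}^\pi[\pi(\cdot\,|\,H_t)\mid S_t]$) is essentially the same mechanism as the paper's, which works directly with $\mu^\pi_{\gamma}(N_\alpha,\alpha)=\int_{N_\alpha}\mu^\pi_{\gamma}(ds)\,\tilde\pi(\alpha|s)=0$ and the domination of the time-$t$ marginal by $\gamma^{-t}\mu^\pi_{\gamma}$; your observation that $\gamma=0$ must be excluded is correct and is left implicit in the paper. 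The genuine gap is exactly the step you flag as the hard part. First, absolute continuity between two measures is \emph{not} determined by a generating $\pi$-system: the implication ``$\tilde\pi(\alpha|s)=0\Rightarrow\pi(\alpha|h_t)=0$'' can hold for every $\alpha$ in a countable generator while failing for some Borel set (compare $\delta_{x}$ against Lebesgue measure on $[0,1]$, tested on intervals with rational endpoints, for which the hypothesis is vacuous). So intersecting countably many almost-sure statements cannot deliver $\pi(\cdot\,|\,h_t)\ll\tilde\pi(\cdot\,|\,s_t)$.

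Second, and more seriously, that uniform claim is false, so no patch along these lines can succeed. Take a single looping state $s$, $\mathcal{A}=[0,1]$, $\gamma\in(0,1)$, and the policy that draws $A_0\sim\mathcal{U}([0,1])$ and replays $A_t=A_0$ forever: every time-$t$ action marginal is uniform, so $\tilde\pi(\cdot\,|\,s)=\mathcal{U}([0,1])$, while $\pi(\cdot\,|\,h_t)=\delta_{a_0}\not\ll\mathcal{U}([0,1])$ for every history with $t\geq1$. (The same example shows that $\tau_2^{\pi}$, which concentrates on the diagonal $\{a_0=a_1\}$, is not absolutely continuous with respect to $\tau_2^{\tilde{\pi}}$ on the full product $\sigma$-algebra, so the section-wise version of the claim you are aiming for cannot hold.) The paper's proof sidesteps this entirely: it only tests null sets of the rectangle form $\sigma_0\times\alpha_0\times\cdots\times\sigma_{t+1}\times\alpha_{t+1}$, so the action set appearing at the last step is a single \emph{fixed} $\alpha_{t+1}$, to which your fixed-$\alpha$ lemma (equivalently the paper's $N_{\alpha}$ argument) applies verbatim, and history-dependent sections $C_{h_t}$ never arise. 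To complete your argument you should likewise restrict the induction to measurable rectangles (or assume a countable action space, where every section is a countable union of singletons) rather than aim for the per-history domination $\pi(\cdot\,|\,h_t)\ll\tilde\pi(\cdot\,|\,s_t)$.
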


\textbf{$\sigma$-finiteness of the occupancy measure:} Theorem \ref{thm:main} applies only if the policy's occupancy measure is $\sigma$-finite. It is trivially verified if $\gamma<1$. The following proposition gives another sufficient condition, we also prove in Apprendix \ref{app:finiteness} that it does not hold if finiteness is replaced with $\sigma$-finiteness.
\begin{restatable}{prop}{finiteness}
    If all deterministic Markovian policies have finite occupancy measures, any policy $\pi$ admits a finite occupancy measure.
    \label{prop:finiteness}
\end{restatable}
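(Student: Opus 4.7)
The case $\gamma<1$ is immediate: for any policy,
\[
\mu^\pi_\gamma(\mathcal{S}\times\mathcal{A}) \;\le\; \sum_{t=0}^{\infty}\gamma^t \;=\; \frac{1}{1-\gamma},
\]
so the conclusion holds unconditionally. I would therefore focus on the undiscounted case $\gamma=1$, in which the total occupancy equals the expected episode length $\mathbb{E}^\pi[T]$, with $T\coloneqq\inf\{t\ge 0 : S_t = s_f\}\in\mathbb{N}\cup\{+\infty\}$.

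I would argue by contrapositive: assume some (possibly non-Markovian, stochastic) policy $\pi$ satisfies $\mathbb{E}^\pi[T]=+\infty$, and produce a deterministic Markovian policy with the same property, contradicting the hypothesis. The central tool is the classical dynamic-programming optimality principle applied to the positive total-reward MDP that pays $1$ at every non-terminal step. Introduce the upper envelope $V^*(s)\coloneqq\sup_{\pi'\in\Pi}\mathbb{E}^{\pi'}[T\mid S_0=s]\in[0,+\infty]$. Conditioning on $S_0$ and using $V^{\pi}\le V^*$ pointwise gives $\int V^*(s_0)\,p_0(ds_0)\ge \mathbb{E}^\pi[T]=+\infty$. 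Standard MDP theory then states that $V^*$ satisfies the Bellman optimality equation
\[
V^*(s) \;=\; 1 + \sup_{a\in\mathcal{A}}\int_{\mathcal{S}}V^*(s')\,p(ds'\mid s,a),
\]
and that a measurable $\argmax$ selection yields a deterministic Markovian policy $\pi^*$ with $V^{\pi^*}=V^*$ pointwise. Integrating against $p_0$ then gives $\mathbb{E}^{\pi^*}[T]=+\infty$, the sought contradiction.

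The main obstacle is the measurable-selection step---equivalently, the existence of an optimal deterministic Markovian policy for the positive total-reward problem---in a fully general Borel state-action setting, which typically requires standard stochastic-optimal-control machinery (e.g., Bertsekas--Shreve-type results). In the finite state/action case the argument collapses to one line: $V^*(s)=\max_{\pi_d\in\Pi_\textsc{dm}}V^{\pi_d}(s)$ is a pointwise maximum of finitely many finite values, hence bounded on $\mathcal{S}$, so $\mathbb{E}^\pi[T]\le\max_{s}V^*(s)<\infty$ for every $\pi$.
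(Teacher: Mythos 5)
Your proof follows essentially the same route as the paper's: both reduce the total occupancy $\mu_\gamma^\pi(\mathcal{S})$ to the value of the unit-reward MDP ($r\equiv 1$) and invoke the existence of an optimal deterministic Markovian policy to bound it by $\max_{\pi_{dm}\in\Pi_\textsc{dm}}\mu_\gamma^{\pi_{dm}}(\mathcal{S})<\infty$. The only substantive difference is that you explicitly flag the measurable-selection/optimal-stationary-policy subtlety for positive total-reward problems on general state-action spaces, a technical point the paper glosses over by citing the finite-MDP result.
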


\textbf{Policies with inter-episode memory:} The occupancy measure theory does not concern policies that carry memory from one episode to another since such policies do not admit occupancy measures that are constant across episodes. Still, oftentimes datasets and replay buffers are collected with a learning algorithm, \textit{i.e.}, a policy $\pi$ that is updated across time. Let us consider the recording of the $N$ policies $\{\pi_i\}_{i\in[N]}\in\Pi^N$ that were used in each individual episode:
\begin{align}
    \pi_{i}(\cdot\;|\;h)\doteq \pi\left(\cdot\;|\;h\cup h_{\tau_{i-1}}\cup \dots \cup h_{\tau_1}\right),
\end{align}
where $h$ is the current trajectory history and $h_{\tau_i}$ denotes the recorded history of the $i$\textsuperscript{th} trajectory. Then, Theorem \ref{thm:main} may be applied on the inter-episode-policy $\bar{\pi}\doteq \mathcal{U}(\{\pi_i\}_{i\in[N]})$ that uniformly samples at the start of each trajectory a policy among the $N$ policies, allowing us to conclude that the occupancy in the dataset may still be reproduced by a single Markovian policy. Nevertheless, we stress again the fact that this occupancy is not connected to that of $\pi$, since the latter is undetermined.

\textbf{Non-Markovian policies usefulness:} The occupancy equivalence does not deny the algorithmic interest of non-Markovian policies, as their existence may be entailed by the problem setting (Offline RL or replay buffers), and as they may prove their usefulness by allowing to inject inductive bias, such as options, or by generating diverse behaviors with an ensemble of agents. On the contrary, we view Theorem \ref{thm:main} as a tool allowing to carry out theoretical grounding usually restricted to Markovian policies to non-Markovian policies. In the next section, we showcase various such applications.

\section{Direct applications}
\label{sec:application}
In this section, we motivate the impact of Theorem \ref{thm:main} by identifying several theoretical works that it readily allows to improve. In Subsection \ref{sec:spibb}, we argue it allows one to extend most of the theoretical results in the space of Offline RL that rely on behavioral cloning and for which previous guarantees only apply to data generated with a Markovian behavior policy. In particular, we derive the formal proof of the SPIBB family guarantees~\cite{Dias2019b} in the case of non-Markovian behavior policies. In Subsection \ref{sec:replaybuffer}, we further show that it is useful in the online RL setting, for algorithms which utilize a replay buffer. In particular, our result simplifies the derivation of the AWR \cite{awr} algorithm. Finally, Subsection \ref{sec:speculative} enumerates several RL research areas where non-Markovian policies are used and where, therefore, Theorem \ref{thm:main} may have some speculative impact.

\subsection{Offline RL: SPIBB theory}
\label{sec:spibb}
\textbf{Context and setting:} Offline RL consists in training a policy on a fixed set of observations without access to the true environment~\cite{Lange2012,Levine2020}. Most algorithms and analyses either implicitly or explicitly make the assumption that the behavior policy $\beta$ that was used to collect data is unique and Markovian~\cite{buckman2020importance,Fujimoto2019,kumar2020conservative,Laroche2017safe,thomas2015high,Yu2021,yin2021near,shi2022pessimistic}. However, this is generally not true: in healthcare for instance, patients are often followed by different doctors/health centers with different policies. Furthermore, typical benchmarks for offline reinforcement learning are constructed from the experience replay of DQN runs \cite{fujimoto2019benchmarking,agarwal2020optimistic}, or via an amalgamation of expert policies \cite{fu2020d4rl}.

In this section, we study a particular algorithm family called SPIBB for Safe Policy Improvement with Baseline Bootstrapping~\cite{Laroche2017safe,Nadjahi2019,Satija2021,simao2019structure,Dias2019,scholl2022safe}. It consists in allowing policy change only when the change is sufficiently supported by the dataset. These algorithms rely on two components: a state-action uncertainty and an estimate of the behavior policy~\cite{Dias2019b}. We will focus here on the latter.

Estimating a non-Markovian policy faces the curse of dimensionality, thus such policies are often not treated. Theorem \ref{thm:main} proves that the main theoretical results in~\cite{Dias2019b} actually carry over to non-Markovian behavioural policies. Indeed, since there exists a Markovian policy that yields the same expected performance and occupancy measure, one may simply consider that the data was received from the induced Markovian policy and obtain the same performance properties. The remainder of the subsection formalizes this informal claim.

\textbf{Background:} Offline RL is, in essence, reinforcement learning from a fixed dataset of $N$ full trajectories: $\mathcal{D} = \langle x_k,a_k, r_k, x'_k,t_k\rangle_{k\in[|\mathcal{D}|]}$. The starting state of transition $k$ is $x_{k} \sim p_0(\cdot)$ if $t_{k} = 0$ and $x_{k} = x'_{k-1}$ otherwise, $a_k \sim \beta(\cdot|h_k)$ is the performed action,  $r_{k} \sim r(x_k, a_k)$ is the immediate reward, $x'_{k} \sim p(\cdot|x_k, a_k)$ is the reached state, and the trajectory-wise timestep is $t_k=0$ if the previous transition was final and $t_k=t_{k-1}+1$ otherwise.

SPIBB theory only applies to finite MDPs and we therefore limit ourselves to finite MDPs as well in our proof extension. SPIBB algorithms consider the following quantities:
\begin{itemize}
    \item The sample counts: $n_{s,a,s'} = \sum_{k} \mathds{1}(s_k=s,a_k=a,s'_k=s')$, $n_{s,a} = \sum_{s'\in\mathcal{S}} n_{s,a,s'}$, $n_{s} = \sum_{a\in\mathcal{A}} n_{s,a}$, and $n_{s,0} = \sum_{k} \mathds{1}(s_k=s,t_k=0)$.
    \item The Maximum Likelihood Estimate of the environment (MLE MDP)  $\hat{m}=\langle \mathcal{S},\mathcal{A},\hat{p}_0,\hat{p},\hat{r},\gamma\rangle$: $\hat{p}_0(s) = \frac{n_{s,0}}{N}$, $\hat{p}(s'|s,a) = \frac{n_{s,a,s'}}{n_{s,a}}$, and $\hat{r}(s,a) = \frac{\sum_{k} r_k \mathds{1}(s_k=s,a_k=a)}{n_{s,a}}$.
    \item The MLE of the behavior policy (MLE behavior): $\hat{\beta}(a|s) = \frac{n_{s,a}}{n_s}$.
\end{itemize}

SPIBB algorithms constrain the policy search to some safe subset in the Markovian policy space $\Pi_{\hat{\beta}}\subset\Pi_\textsc{m}$ around $\hat{\beta}$, in order to optimize the expected trajectory return $\rho_{\hat{m}}^\pi$ in the MLE MDP: $\pi_\textsc{spibb} = \argmax_{\pi\in\Pi_{\hat{\beta}}} \rho_{\hat{m}}^\pi$.

\textbf{Theorem extension:} We observe from its definition that $\hat{\beta}$ is an unbiased estimate of $\beta$ only if it is Markovian. The proof of Theorem 3.2 in \cite{Dias2019b} makes the following decomposition to control the policy improvement of target policy $\pi$ with respect to the true Markovian policy $\beta$ in the true MDP $m$: $\rho^\pi_m-\rho^{\beta}_m = \left(\rho^\pi_m - \rho^{\hat{\beta}}_m\right) + \left(\rho^{\hat{\beta}}_m -\rho^{\beta}_m\right)$,
where the first and second terms are respectively controlled with safe policy improvement guarantees from the SPIBB theorems and with a behavior policy estimate concentrability. In order to deal with non-Markovian behavior policies $\beta$, we enhance the decomposition as follows:
\begin{align}
     \rho^\pi_m-\rho^{\beta}_m &= \left(\rho^\pi_m - \rho^{\hat{\tilde{\beta}}}_m \right)+ \left(\rho^{\hat{\tilde{\beta}}}_m -\rho^{\tilde{\beta}}_m\right) + \left(\rho^{\tilde{\beta}}_m -\rho^{\beta}_m\right), \label{eq:newdev2}
\end{align}
where $\tilde{\beta}$ is the Markovian policy that admits the same occupancy as the true behavior policy $\beta$, following Theorem \ref{thm:main2} (SPIBB guarantees are restricted to finite state and action spaces), and where $\hat{\tilde{\beta}}$ is the MLE estimate of $\tilde{\beta}$:
    \begin{align}
        \tilde{\beta}(a|s) \coloneqq \frac{\mu_{\gamma}^{\beta}(s,a)}{\mu_{\gamma}^{\beta}(s)} \quad\quad\quad\text{and}\quad\quad\quad\hat{\tilde{\beta}}(a|s) = \frac{\sum_{k} \gamma^{t_k}\mathds{1}(s_k=s,a_k=a)}{\sum_{k} \gamma^{t_k}\mathds{1}(s_k=s)}.
    \label{eq:ts2stoch-spibb}
    \end{align}

The first and second terms of Equation \eqref{eq:newdev2} are controlled in the same manner as their counterparts in the initial decomposition, and Theorem \ref{thm:main} proves that the third term is 0. Note that our extension of the proof is agnostic to the technique used to control the first two terms. It simply converts, at no cost, any result obtained for Markovian policies to any policy admitting a $\sigma$-finite occupancy measure, which is always the case when collecting trajectories since these trajectories must terminate in finite time in expectation. Note also that the estimate of the behavior policy $\hat{\tilde{\beta}}$ is slightly different from $\hat{\beta}$, so that utilizing $\hat{\beta}$ computed with the undiscounted formula would induce a bias for non-Markovian behavior policies.


\subsection{Experience replays in AWR}
\label{sec:replaybuffer}

Many modern deep reinforcement algorithms collect data via adaptive non-Markovian policies, and store data from many recent trajectories in a buffer for training \cite{dqn}. Theoretical analysis of algorithms which utilize experience replay must explicitly handle these nuances, often at significant effort. An example of this can be seen in Advantage-Weighted Regression \cite{awr}, where the derivation of the algorithm is duplicated to justify the use of a replay buffer. Note that in the discussion below, we adapt their notations to be consistent with our results.

The result in their Appendix A gives a proof that when data is collected under a fixed Markovian policy $\beta$, the objective function described in their Section 3.1 converges to (a projection of) the policy $\pi$ with optimal improvement over $\beta$. Concretely, they show in equation (26) that an approximation to the expected improvement is:
\begin{align}
\int_{\mathcal{S}} \mu^{\beta}_\gamma(s) \int_\mathcal{A} \pi(a|s) [ G^{\beta,\gamma}_{s,a} - v^\beta_\gamma(s) ] da ds,\label{eq:awr1}
\end{align}
where $G^{\beta,\gamma}_{s,a}$ denotes the discounted return random variable and $v^\beta_\gamma(s)$ the state value under policy $\beta$, which is only defined when $\beta$ is Markovian. The authors then derive the AWR algorithm by maximizing this approximate expected improvement.
It is not immediate that the data in the replay buffer, which may contain a mixture of policies, possesses such a stationary distribution. For the case where the dataset consists of $k$ policies $\{ \pi_1, \pi_2, ... \pi_k \}$, with probabilities $\{ w_1, w_2, ... w_k \}$, they extend their result in Appendix B by showing that a similar approximation holds: 
\begin{align}
\sum_{i=1}^k w_i \int_{\mathcal{S}} \mu^{\pi_i}_\gamma(s) \int_\mathcal{A} \pi_i(a|s) [ G^{\pi_i,\gamma}_{s,a} - v^{\pi_i}_\gamma(s) ] da ds.
\end{align}
From there, they re-derive AWR. Interestingly, this approximation relies on a construction, given explicitly in their equation (41), which corresponds precisely to our definition of $\tilde{\pi}$ in their specific setting.

Their derivation in Appendix B could have been achieved more simply by combining our Theorem \ref{thm:main} with the argument in their Appendix A. Optimizing the provided objective while sampling from the replay buffer yields the policy with optimal improvement over the policy whose stationary distribution matches that of the replay buffer, and whose existence we have verified in Theorem \ref{thm:main}. In fact, this new version of their proof would be more powerful, as their argument in Appendix B requires each of the policies $\{ \pi_1, \pi_2, ... \pi_k \}$ to be Markovian (\textit{i.e.}, it requires the algorithm to never perform policy updates mid-episode), whereas our Theorem \ref{thm:main} does not.

\subsection{Speculative impact}
\label{sec:speculative}
Less directly, many RL domains or algorithms may benefit from Theorem \ref{thm:main} as they rely on the use of non-Markovian policies (generally a collection of Markovian policies). In most of these domains, the non-Markovian property of the policies is a feature, not an issue: it allows one to break down and better compound some conflicting objectives, to induce diversity, and/or to design new policies from elementary ones. Theorem \ref{thm:main} may be a powerful tool for their respective convergence guarantees by proving that their non-Markovian policy admits a well-studied (i.e., Markovian) policy emulating its occupancy measure.

\textbf{Non-Markovian policy induced by the problem setting:} Behavioral Cloning \cite{Urbancic1994,Torabi2018} and Imitation Learning~\cite{Ross2011,Ho2016,Hussein2017} consist in training an agent to reproduce an expert behavior from demonstrations. Sometimes, the expert behavior collection is generated from several near-optimal policies. Moreover, some approaches involve interactive data collection processes in order to make sure that the agent can recover from its own errors. In both cases, similarly to Offline RL, the collected data does not come from a single Markovian policy and Theorem \ref{thm:main} may prove to be useful.
    
\textbf{Non-Markovian policy induced by algorithmic family:} Multi-objective algorithms often combine several policies to generate a behavior that matches the new objective trade-offs~\cite{shelton2001importance,vamplew2009constructing}. Ensemble RL~\cite{Wiering2008}, algorithm selection for RL~\cite{Laroche2018AS}, diversity-induced exploration~\cite{eysenbach2018diversity}, and curriculum for RL~\cite{czarnecki2018mix} all rely on training a family of RL agents. Some more theoretical papers focus on non-Markovian policies~\cite{wu2004non,Scherrer2012}. More and more, policy gradient algorithms utilize and maintain several policies. The PC-PG algorithm~\cite{Agarwal2020} consists in improving the global convergence guarantees of policy gradient methods by implementing an initial state distribution that covers the whole state space. To do so, they learn a policy cover that is made of multiple Markovian policies. Jekyll \& Hyde~\cite{Laroche2021} is another actor-critic algorithm improving convergence guarantees by maintaining two Markovian policies: one dedicated to pure exploration and the other to pure exploitation. Finally, it is worth mentioning distributed agents which perform training updates over several behavioral policies \cite{mnih2016asynchronous,Horgan2018,Schmitt2020}.

\section{Proof of Theorem \ref{thm:main}}
\label{sec:proofs}
\begin{proof}[Proof of Theorem \ref{thm:main}]
Let us start with the first bullet point, concerning the existence of $\tilde{\pi}$ and the fact that it is indeed a Markovian policy.

Letting $\alpha\subseteq\mathcal{A}$ be a measurable set in $\mathcal{A}$. We see that for any $\sigma\subseteq\mathcal{S}$ measurable, we have $\mu^\pi_{\gamma}(\sigma, \alpha) \leq \mu^\pi_{\gamma}(\sigma)$. This directly implies that $\mu^\pi_{\gamma}(\cdot, \alpha)$ is absolutely continuous with respect to $\mu^\pi_{\gamma}(\cdot)$ (both seen as measures on $\mathcal{S}$). Since $\mu^\pi_{\gamma}(\cdot)$ is $\sigma$-finite, the Radon-Nikodym theorem states that $\mu^\pi_{\gamma}(\cdot, \alpha)$ admits a density with respect to $\mu^\pi_{\gamma}(\cdot)$, we let $\tilde{\pi}$ denote that Radon-Nikodym derivative.

We now prove that $\tilde{\pi}$ is a probability measure. The non-negativity is directly inherited from that of measure $\mu^\pi_{\gamma}$. The null-empty set comes from the fact that $\mu^\pi_{\gamma}(\sigma, \emptyset)=0$ for all measurable sets $\sigma$. The countable additivity is a consequence of the countable additivity of the Radon Nykodym derivative and of the measure $\mu^\pi_{\gamma}$. And finally, it is clear from its definition that $\tilde{\pi}(\mathcal{A} | s)=1$ and that $\tilde{\pi}$ only depends on the current state, which makes it a Markovian policy.

Let us now move the second bullet point, which is our core result. Recalling that $\mu^{\pi}_{\gamma}(ds) = \int_{\mathcal{A}} \mu_{\gamma}^{\pi}(ds,da)$, and $\mu^{\tilde{\pi}}_{\gamma}(ds) = \int_{\mathcal{A}} \mu_{\gamma}^{\tilde{\pi}}(ds,da)$, we wish to prove that for all $\sigma \in \Sigma_\mathcal{S}$: $\mu^{\pi}_{\gamma}(\sigma) = \mu^{\tilde{\pi}}_{\gamma}(\sigma)$. Letting $\sigma \in \Sigma_\mathcal{S}$ denote a measurable set such that $\mu^{\pi}_{\gamma}(\sigma) < \infty$, we know from the conservation of mass property of $\sigma$-finite occupancy measures (see Proposition~\ref{prop:conservation} in Appendix~\ref{app:auxilliary}) that:
\begin{align}
    \mu^{\pi}_{\gamma}(\sigma) &= p_0(\sigma) + \gamma \int_{\mathcal{S}} \int_{\mathcal{A}} \mu^{\pi}_{\gamma}(ds_{-1}, da) \; p(\sigma | s_{-1}, a), 
\end{align}
where $p(\sigma | s_{-1}, a)$ denotes the probability of transitioning to $\sigma$ when taking action $a$ in state $s_{-1}$. Now, by definition of the Radon-Nikodym derivative, we have $\mu^{\pi}_{\gamma}(ds, da) = \mu^{\pi}_{\gamma}(ds) \frac{d \mu^\pi_{\gamma}(\cdot \times da)}{d \mu^\pi_{\gamma}(\cdot \times \mathcal{A})}(s) = \mu^{\pi}_{\gamma}(ds) \tilde{\pi}(da|s)$. Using that property, we see that:
\begin{align}
    \mu^{\pi}_{\gamma}(\sigma) &= p_0(\sigma) + \gamma \int_{\mathcal{S}} \int_{\mathcal{A}} \mu^{\pi}_{\gamma}(ds_{-1}) \tilde{\pi}(da|s_{-1}) \; p(\sigma | s_{-1}, a) \\
    &= p_0(\sigma) + \gamma \int_{\mathcal{S}} \mu^{\pi}_{\gamma}(ds_{-1}) \; p^{\tilde{\pi}}(s_{-1}, \sigma),
\end{align}
where $p^{\tilde{\pi}}(s_{-1}, ds)$ is the Markov kernel on $\mathcal{S} \times \mathcal{S}$ obtained by composition of $p$ and $\tilde{\pi}$, and $p^{\tilde{\pi}}(s_{-1}, \sigma)$ is the probability of transitioning to $\sigma$ when acting according to $\tilde{\pi}$ in state $s_{-1}$. Applying the above conservation equality recursively $t$ times gives:
\begin{align}
    \mu^{\pi}_{\gamma}(\sigma) = p_0(\sigma) &+ \gamma \int_{\mathcal{S}} p_0(ds_{-1}) p^{\tilde{\pi}}(s_{-1}, \sigma) + \cdots + \gamma^t \int_{\mathcal{S}} p_0(ds_{-t}) \; p^{\tilde{\pi}}_t(s_{-t}, \sigma) \nonumber \\
   &+ \gamma^{t+1} \int_{\mathcal{S}} \mu^{\pi}_{\gamma}(ds_{-t-1}) \; p^{\tilde{\pi}}_{t+1}(s_{-t-1}, \sigma),
   \label{eq:recursion}
\end{align}
where $p^{\tilde{\pi}}_t$ denotes the composition of $p^{\tilde{\pi}}$ with itself $t$ times. The equality can easily be shown by induction using Fubini's theorem. Given the finiteness of $\mu^{\pi}_{\gamma}(\sigma)$ and the positivity of all the terms involved, we infer there exists $a \geq 0$ such that $\gamma^{t+1} \int_{\mathcal{S}} \mu^{\pi}_{\gamma}(ds_{-t-1}) \; p^{\tilde{\pi}}_{t+1}(s_{-t-1}, \sigma) \to_{t \to \infty} a$. Assuming $a = 0$ (which we shall prove later), we obtain:
\begin{align}
    \mu^{\pi}_{\gamma}(\sigma) &= p_0(\sigma) + \sum_{t=1}^{\infty} \gamma^t \int_{\mathcal{S}} p_0(ds_{-t}) \; p^{\tilde{\pi}}_t(s_{-t}, \sigma).
\end{align}
Now, by the very definition of occupancy measures and Markov policies, we see that $p_0(\sigma) + \sum_{t=1}^{T} \gamma^t \int_{\mathcal{S}} p_0(ds_{-t}) \; p^{\tilde{\pi}}_t(s_{-t}, \sigma)$ is the partial sum of $\mu^{\tilde{\pi}}_{\gamma}(\sigma)$ in Equation~\ref{eq:occupancy-measure}:\footnote{Note that in the general case, it is not the partial sum of $\mu^{\pi}_{\gamma}(\sigma)$ due to $\pi$'s non-Markovian character.}
\begin{align}
    \mu^{\tilde{\pi}}_{\gamma}(\sigma,\mathcal{A}) = \lim_{T \to +\infty}\mathbb{E}\left[\sum_{t=0}^T \gamma^t\mathds{1}\left(S_t \in \sigma\right)\times\mathds{1}\left(A_t \in \mathcal{A}\right) \bigg|\!\!\begin{array}{l}
S_0\sim p_0(\cdot), A_t \sim \tilde{\pi}(\cdot|S_{t}),\\S_{t+1}\sim p(\cdot|S_t,A_t)\end{array}\right] .
\label{eq:partial}
\end{align}
The convergence of this partial sum is guaranteed by the finiteness of $\mu^{\pi}_{\gamma}(\sigma)$. In other words, we get:
\begin{align}
    \mu^{\tilde{\pi}}_{\gamma}(\sigma) = p_0(\sigma) + \sum_{t=1}^{\infty} \gamma^t \int_{\mathcal{S}} p_0(ds_{-t}) \; p^{\tilde{\pi}}_t(s_{-t}, \sigma) = \mu^{\pi}_{\gamma}(\sigma) < +\infty.
    \label{eq:equality}
\end{align}

Now, for any arbitrary $\sigma \in \Sigma_\mathcal{S}$ (possibly of infinite $\mu^{\pi}_{\gamma}$-measure), we know from the $\sigma$-finiteness of $\mu^{\pi}_{\gamma}$ that there exists a sequence $(\sigma_n)_{n \in \mathbb{N}}$ of disjoint measurable sets such that $\forall n \in \mathbb{N}, \mu^{\pi}_{\gamma}(\sigma_n) < +\infty$ and $\sigma = \cup_{n=0}^{\infty} \sigma_n$. We compute:
\begin{align}
    \mu^{\pi}_{\gamma}(\sigma) = \sum_{n=0}^{\infty} \mu^{\pi}_{\gamma}(\sigma_n) = \sum_{n=0}^{\infty} \mu^{\tilde{\pi}}_{\gamma}(\sigma_n) = \mu^{\tilde{\pi}}_{\gamma}(\sigma).
\end{align}
Combining this equality with the policy definition \ref{eq:ts2stoch} gives the final result: $\mu^{\pi}_{\gamma}(ds, da) = \mu^{\tilde{\pi}}_{\gamma}(ds, da)$.

We are left with proving that $\lim_{t \to \infty} \gamma^{t+1} \int_{\mathcal{S}} \mu^{\pi}_{\gamma}(ds_{-t-1}) \; p^{\tilde{\pi}}_{t+1}(s_{-t-1}, \sigma) = 0$. It is obvious when $\gamma < 1$, since in that case, $\int_{\mathcal{S}} \mu^{\pi}_{\gamma}(ds_{-t-1}) < \infty$, implying that the integral term is bounded.

The case $\gamma = 1$ is somewhat more involved (as is customary with undiscounted MDPs). We start by noticing that
$\int_{\mathcal{S}} \mu^{\tilde{\pi}}_{\gamma}(ds_{-t-1}) \; p^{\tilde{\pi}}_{t+1}(s_{-t-1}, \sigma) \to_{t \to \infty} 0$. This stems directly from applying Eq.~\eqref{eq:recursion} to $\tilde{\pi}$ and then leveraging the first equality in~\eqref{eq:equality} .
Now, we let $L = \{ s \in \mathcal{S} \;|\; p^{\tilde{\pi}}_{t}(s, \sigma) \to_{t \to \infty} 0 \}$. From $\int_{L^C} \mu^{\tilde{\pi}}_{\gamma}(ds_{-t-1}) \; p^{\tilde{\pi}}_{t+1}(s_{-t-1}, \sigma) \to_{t \to \infty} 0$ and by the definition of $L$, we infer that $\mu^{\tilde{\pi}}_{\gamma}(L^C) = 0$. It also stems directly from Proposition~\ref{prop:ac-traj} that $\mu^{\pi}_{\gamma}$ is absolutely continuous with respect to $\mu^{\tilde{\pi}}_{\gamma}$, which implies that $\mu^{\pi}_{\gamma}(L^C) = 0$. Finally, by the dominated convergence theorem (applicable only if $\mu^{\pi}_{\gamma}(\mathcal{S}) < +\infty$, see Appendix \ref{app:dominated} for the proof in the most general case):
\begin{align}
    \int_{\mathcal{S}} \mu^{\pi}_{\gamma}(ds_{-t-1}) \; p^{\tilde{\pi}}_{t+1}(s_{-t-1}, \sigma) = \int_{L} \mu^{\pi}_{\gamma}(ds_{-t-1}) \; p^{\tilde{\pi}}_{t+1}(s_{-t-1}, \sigma) \to_{t \to \infty} 0,
\end{align}
which concludes the proof.
\end{proof}

\section{Conclusion}
\label{sec:conclusion}
In this paper, we developed a general theory of the occupancy measure in MDPs and proved that, for any non-Markovian policy admitting a $\sigma$-finite occupancy, there exists a Markovian policy with the same occupancy. We illustrate the impact of this theory by extending existing proofs in the literature to non-Markovian policies.

\newpage
\bibliographystyle{abbrv}
\bibliography{ref}

\newpage
\section*{Checklist}


\begin{enumerate}

\item For all authors...
\begin{enumerate}
  \item Do the main claims made in the abstract and introduction accurately reflect the paper's contributions and scope?
    \answerYes{}
  \item Did you describe the limitations of your work?
    \answerYes{See Section \ref{sec:theory}}
  \item Did you discuss any potential negative societal impacts of your work?
    \answerNA{This is a theoretical work.}
  \item Have you read the ethics review guidelines and ensured that your paper conforms to them?
    \answerYes{}
\end{enumerate}

\item If you are including theoretical results...
\begin{enumerate}
  \item Did you state the full set of assumptions of all theoretical results?
    \answerYes{For concision, given the high universality of our main theorem, and the technicality of the condition, the $\sigma$-finiteness condition is enunciated neither in the abstract, nor the first part of the introduction, but clearly stated in the formal theorem.}
	\item Did you include complete proofs of all theoretical results?
    \answerYes{The main theorems are proved in the main document and the rest in the supplementary material.}
\end{enumerate}

\item If you ran experiments...
    \answerNA{No experiment}

\item If you are using existing assets (e.g., code, data, models) or curating/releasing new assets...
    \answerNA{No asset has been used nor released}

\item If you used crowdsourcing or conducted research with human subjects... \answerNA{No crowdsourcing or humans involved else than the workforce of the authors.}
\end{enumerate}

\newpage
\appendix



\section{Theory}
\label{app:theory}

\subsection{Proof of Theorem~\ref{thm:main}}
\label{app:dominated}

We complete the proof of our main theorem here.

\begin{proof}
    For the dominated convergence theorem to be applicable, one needs for instance $\mu^{\pi}_{\gamma}(\mathcal{S}) < +\infty$ (in which case $|p^{\tilde{\pi}}_{t+1}(s_{-t-1}, \sigma)| \leq 1, \forall s_{-t-1} \in \mathcal{S}$ and $1$ is integrable under $\mu^{\pi}_{\gamma}(\mathcal{S})$). We have thus established so far that $\mu^{\pi}_{\gamma}(\sigma) = \mu^{\tilde{\pi}}_{\gamma}(\sigma)$ in the case of a finite $\mu^{\pi}_{\gamma}(\mathcal{S})$.
    
    Let us now show that it also holds when $\mu^{\pi}_{\gamma}(\mathcal{S}) = \infty$ (still assuming $\mu^{\pi}_{\gamma}$ is $\sigma$-finite). To that end, we are going to create an auxiliary MDP on which $\pi$ and $\tilde{\pi}$ behave approximately as in the initial MDP in a large set of finite $\mu^{\pi}_{\gamma}$-measure ``around'' $\sigma$, but where both have finite occupancy measures.
    
    We denote by $\mu^{\tilde{\pi},T}_{\gamma}(\sigma)$ the partial sum of $\mu^{\tilde{\pi}}_{\gamma}(\sigma)$ (\textit{i.e.}, the sum till $T$ in the definition of occupancy measures, \textit{e.g.}, equation~\eqref{eq:partial}). Similarly, $\mu^{\pi,T}_{\gamma}(\sigma)$ denotes the partial sum of $\mu^{\pi}_{\gamma}(\sigma)$.
    
    Let $\epsilon > 0$. Given the finiteness of $\mu^{\pi}_{\gamma}(\sigma)$ and $\mu^{\tilde{\pi}}_{\gamma}(\sigma)$, there exists $T > 0$ such that:
    \begin{align}
        \mu^{\tilde{\pi}}_{\gamma}(\sigma) - \mu^{\tilde{\pi},T}_{\gamma}(\sigma) \leq \epsilon \hspace{1cm} \text{and} \hspace{1cm} \mu^{\pi}_{\gamma}(\sigma) - \mu^{\pi,T}_{\gamma}(\sigma) \leq \epsilon.
    \end{align}
    The first important point is that $\mu^{\pi,T}_{\gamma}(\mathcal{S}) \leq T + 1 < +\infty$.
    
    Now, let $(\sigma_i)_{i \in \mathbb{N}}$ be a set of disjoint measurable sets covering $\mathcal{S}$ such that $\sigma_0 = \sigma$ and $\mu^{\pi}_{\gamma}(\sigma_i) < \infty$ for all $i \in \mathbb{N}$ (such a construction clearly exists by the $\sigma$-finiteness of $\mu^{\pi}_{\gamma}$). From the finiteness of $\mu^{\pi,T}_{\gamma}(\mathcal{S})$, there exists $N \in \mathbb{N}$ such that:
    \begin{align}
        \mu^{\pi,T}_{\gamma}(\mathcal{S}) - \mu^{\pi,T}_{\gamma}(\bigcup_{0 \leq i \leq N} \sigma_i) \leq \frac{\epsilon}{T}.
        \label{eq:s_epsilon}
    \end{align}
    We let $\mathcal{S}^\epsilon \defeq \bigcup_{0 \leq i \leq N} \sigma_i$, and emphasize that $\mu^{\pi}_{\gamma}(\mathcal{S}^\epsilon) < \infty$\footnote{This is the part of the proof that breaks down when the measure $\mu^{\pi}_{\gamma}$ is not $\sigma$-finite.}.
    
    We then define the auxiliary MDP $m^\epsilon=\langle \mathcal{S}, \mathcal{A},p_0^\epsilon,p^\epsilon,r,\gamma \rangle$, where $p_0^\epsilon(\xi) = p_0(\xi \cap \mathcal{S}^\epsilon)$\footnote{Note that $p_0(\mathcal{S}) \leq 1$ in this new construction.} and $p^\epsilon(\xi | s, a) = p(\xi \cap \mathcal{S}^\epsilon | s, a)$ for any $\xi \in \Sigma_{\mathcal{S}}$. In this new MDP, any transition that exits $\mathcal{S}^\epsilon$ is terminal. The policies $\pi$ and $\tilde{\pi}$ are kept identical. We let $\nu^{\pi}_{\gamma}$ and $\nu^{\tilde{\pi}}_{\gamma}$ denote the occupancy measures of $\pi$ and $\tilde{\pi}$ in $m^\epsilon$. The following properties hold:
    \begin{enumerate}[(i).]
        \item $\nu^{\pi}_{\gamma}(\mathcal{S}) \leq \mu^{\pi}_{\gamma}(\mathcal{S}^\epsilon) < +\infty$,
        \item $\nu^{\pi,T}_{\gamma}(\sigma) \leq \nu^{\pi}_{\gamma}(\sigma) \leq \mu^{\pi}_{\gamma}(\sigma)$,
        \item $0 \leq \mu^{\pi,T}_{\gamma}(\sigma) - \nu^{\pi,T}_{\gamma}(\sigma) \leq \epsilon$,
        \item $0 \leq \nu^{\pi}_{\gamma}(\sigma) - \nu^{\pi,T}_{\gamma}(\sigma) \leq 2\epsilon$,
        \item $0 \leq \nu^{\tilde{\pi}}_{\gamma}(\sigma) - \nu^{\tilde{\pi},T}_{\gamma}(\sigma) \leq 2\epsilon$.
    \end{enumerate}
    (i) and (ii) follow directly from the definition of $m^\epsilon$. (iii) stems from equation \eqref{eq:s_epsilon}: the mass in $\sigma$ lost from exiting $\mathcal{S}^\epsilon$ prior to timestep $T$ is at most $T \times \frac{\epsilon}{T}$.
    For (iv) (and similarly (v)), we have:
    \begin{align}
        \nu^{\pi}_{\gamma}(\sigma) - \nu^{\pi,T}_{\gamma}(\sigma) \leq \mu^{\pi}_{\gamma}(\sigma) - \nu^{\pi,T}_{\gamma}(\sigma) \leq \mu^{\pi}_{\gamma}(\sigma) - \mu^{\pi, T}_{\gamma}(\sigma) + \mu^{\pi,T}_{\gamma}(\sigma) - \nu^{\pi,T}_{\gamma}(\sigma) \leq 2 \epsilon.
    \end{align}
    Importantly, (i) implies that $\mathcal{S}$ has finite measure under $\nu^{\pi}_{\gamma}$, allowing us to apply our result on finite measures, guaranteeing: $\nu^{\pi}_{\gamma}(\sigma) = \nu^{\tilde{\pi}}_{\gamma}(\sigma)$.
    Using the various inequalities above, we get:
    \begin{align}
        \mu^{\pi}_{\gamma}(\sigma) - \nu^{\pi}_{\gamma}(\sigma) = \mu^{\pi}_{\gamma}(\sigma) - \mu^{\pi,T}_{\gamma}(\sigma) + \mu^{\pi,T}_{\gamma}(\sigma) - \nu^{\pi,T}_{\gamma}(\sigma) + \nu^{\pi,T}_{\gamma}(\sigma) - \nu^{\pi}_{\gamma}(\sigma)
        \leq 2 \epsilon,
    \end{align}
    and
    \begin{align}
        \nu^{\tilde{\pi}}_{\gamma}(\sigma) - \mu^{\tilde{\pi}}_{\gamma}(\sigma) &= \nu^{\tilde{\pi}}_{\gamma}(\sigma) - \nu^{\tilde{\pi},T}_{\gamma}(\sigma) + \nu^{\tilde{\pi},T}_{\gamma}(\sigma) - \mu^{\tilde{\pi},T}_{\gamma}(\sigma) +\mu^{\tilde{\pi},T}_{\gamma}(\sigma)  - \mu^{\tilde{\pi}}_{\gamma}(\sigma) \leq 2 \epsilon.
    \end{align}
    Summing those two inequalities gives:
    \begin{align}
        \mu^{\pi}_{\gamma}(\sigma) - \nu^{\pi}_{\gamma}(\sigma) + \nu^{\tilde{\pi}}_{\gamma}(\sigma) - \mu^{\tilde{\pi}}_{\gamma}(\sigma) = \mu^{\pi}_{\gamma}(\sigma) - \mu^{\tilde{\pi}}_{\gamma}(\sigma) \leq 4 \epsilon.
    \end{align}
    Since this holds for any $\epsilon > 0$, we get that $\mu^{\pi}_{\gamma}(\sigma) - \mu^{\tilde{\pi}}_{\gamma}(\sigma) \leq 0$, which concludes the proof (the inequality in the other direction has already been established).
\end{proof}

\subsection{Auxiliary Result}
\label{app:auxilliary}

For the sake of completeness, we include the following very standard result.

\begin{prop}[Conservation of mass]
    For any policy $\pi$ whose occupancy measure is $\sigma$-finite, we have:
    \begin{align}
        \mu^{\pi}_{\gamma}(ds) &= p_0(ds) + \gamma \int_{\mathcal{S}} \int_{\mathcal{A}} \mu^{\pi}_{\gamma}(ds_{-1}, da) \; p(ds|s_{-1}, a).
    \end{align}
    \label{prop:conservation}
\end{prop}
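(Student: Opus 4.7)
\textbf{Proof plan for Proposition~\ref{prop:conservation}.} My plan is to verify the identity in its integrated form: for every measurable $\sigma \in \Sigma_\mathcal{S}$,
\begin{align*}
    \mu^{\pi}_{\gamma}(\sigma) = p_0(\sigma) + \gamma \int_{\mathcal{S}} \int_{\mathcal{A}} \mu^{\pi}_{\gamma}(ds_{-1}, da)\, p(\sigma \mid s_{-1}, a),
\end{align*}
since two $\sigma$-finite measures on $\Sigma_\mathcal{S}$ that agree on all measurable sets are equal (and since the right-hand side is a measure in $\sigma$ by Tonelli). The starting point is the defining series
$\mu^{\pi}_{\gamma}(\sigma) = \mathbb{E}\bigl[\sum_{t=0}^\infty \gamma^t \mathds{1}(S_t \in \sigma)\bigr]$,
which I will split as the $t=0$ term plus a tail $\gamma \sum_{t=0}^\infty \gamma^t \mathds{1}(S_{t+1} \in \sigma)$.

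The $t=0$ term is exactly $p_0(\sigma)$ by the definition of $S_0 \sim p_0$. For the tail, I will use the tower property, conditioning each summand on the history $H_t$ augmented with $A_t$. By the MDP transition structure, $\mathbb{E}[\mathds{1}(S_{t+1} \in \sigma) \mid H_t, A_t] = p(\sigma \mid S_t, A_t)$, and this identity is agnostic to $\pi$ being Markovian or not because it only uses the environment kernel. Hence
\begin{align*}
    \gamma\, \mathbb{E}\!\left[\sum_{t=0}^\infty \gamma^t \mathds{1}(S_{t+1}\in\sigma)\right] = \gamma \sum_{t=0}^\infty \gamma^t\, \mathbb{E}\bigl[p(\sigma \mid S_t, A_t)\bigr],
\end{align*}
where swapping expectation and infinite sum is justified by Tonelli since every integrand is non-negative.

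The final step is to recognize the last expression as an integral against the joint occupancy measure. By Theorem~\ref{thm:measure}, $\mu^\pi_\gamma$ is a genuine (positive) measure on $\Sigma_{\mathcal{S}} \times \Sigma_{\mathcal{A}}$, and by Tonelli applied in the reverse direction, for any non-negative measurable $f(s,a)$,
\begin{align*}
    \int_{\mathcal{S}}\int_{\mathcal{A}} f(s,a)\, \mu^{\pi}_{\gamma}(ds, da) = \sum_{t=0}^\infty \gamma^t\, \mathbb{E}[f(S_t, A_t)].
\end{align*}
Applying this with $f(s,a) = p(\sigma \mid s, a)$ (which is measurable as a function of $(s,a)$ since $p$ is a Markov kernel) turns the tail term into $\gamma \int_\mathcal{S}\int_\mathcal{A} \mu^\pi_\gamma(ds_{-1}, da)\, p(\sigma \mid s_{-1}, a)$, completing the identity.

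The main technical obstacle is the Fubini/Tonelli bookkeeping: I need to ensure the iterated integrals are well-defined and can be interchanged with the infinite summation. This is where $\sigma$-finiteness plays its role — it lets me apply Tonelli on $\mathcal{S} \times \mathcal{A}$ and allows me to localize to sets of finite measure when turning the identity on $\sigma$ into the claimed identity between the measures $\mu^\pi_\gamma(ds)$ and $p_0(ds) + \gamma \int p(ds \mid s_{-1}, a)\mu^\pi_\gamma(ds_{-1},da)$ on $\Sigma_\mathcal{S}$. Positivity of all integrands means I never need a dominating function, so Tonelli suffices throughout and no convergence subtleties arise beyond verifying measurability of $(s,a) \mapsto p(\sigma \mid s, a)$, which is built into the definition of the transition kernel.
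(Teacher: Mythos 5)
Your proposal is correct and follows essentially the same route as the paper's proof: peel off the $t=0$ term, shift the index, push each summand one step through the transition kernel, and resum against the occupancy measure (the paper phrases your tower-property step as the marginal-law identity $p_t(ds) = \int_{\mathcal{S}}\int_{\mathcal{A}} p_{t-1}(ds_{-1},da)\,p(ds\mid s_{-1},a)$). The only cosmetic difference is that the paper first localizes to sets $\sigma$ with $\mu^{\pi}_{\gamma}(\sigma)<+\infty$ in order to invoke Fubini and then extends by $\sigma$-finiteness, whereas your consistent use of Tonelli for non-negative integrands lets the identity hold directly in $[0,+\infty]$ for every measurable $\sigma$.
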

\begin{proof}
    Note that the equality is meant in the sense of measures, that is, for any $\sigma \in \Sigma_{\mathcal{S}}$: 
    \begin{align}
        \mu^{\pi}_{\gamma}(\sigma) &= p_0(\sigma) + \gamma \int_{\mathcal{S}} \int_{\mathcal{A}} \mu^{\pi}_{\gamma}(ds_{-1}, da) \; p(\sigma|s_{-1}, a).
        \label{eq:conservation}
    \end{align}
    Let us start by proving that this equality holds for any $\sigma$ such that $\mu^{\pi}_{\gamma}(\sigma) < +\infty$:
    \begin{align}
        \mu^{\pi}_{\gamma}(\sigma,\alpha) \coloneqq& \mathbb{E}\left[\sum_{t=0}^\infty \gamma^t\mathds{1}\left(S_t \in \sigma\right)\times\mathds{1}\left(A_t \in \alpha\right) \bigg|\!\!\begin{array}{l}
    S_0\sim p_0(\cdot), A_t \sim \pi(\cdot|H_{t}),\\S_{t+1}\sim p(\cdot|S_t,A_t)\end{array}\right] \\
    =& \sum_{t=0}^\infty \gamma^t \int_\sigma \int_\alpha p_t(ds, da),
    \end{align}
    where the second line is a direct application of Fubini's theorem (valid since $\mu^{\pi}_{\gamma}(\sigma) < +\infty$), and $p_t(ds, da)$ denotes the measure on $\mathcal{S} \times \mathcal{A}$ induced by $(S_t,A_t)$ when both evolve according to $p_0$, $\pi$ and $p$. Note in particular that $p_0(ds, da) = p_0(ds) \pi(da|s)$ where $p_0(ds)$ is the initial state distribution in the MDP. Naturally, $p_t(ds)$ denotes the marginal of $p_t(ds, da)$ on $\mathcal{S}$.
    Focusing on $\alpha = \mathcal{A}$, we see that:
    \begin{align}
        \mu^{\pi}_{\gamma}(\sigma) &= \sum_{t=0}^\infty \gamma^t \int_\sigma p_t(ds) = p_0(\sigma) + \sum_{t=1}^\infty \gamma^t \int_\sigma p_t(ds) \\
        &= p_0(\sigma) + \sum_{t=1}^\infty \gamma^t \int_\sigma \int_{\mathcal{S}} \int_\mathcal{A} p_{t-1}(ds_{-1}, da) \; p(ds | s_{-1}, a) \\
        &= p_0(\sigma) + \gamma \int_\sigma \left[ \sum_{t=1}^\infty \gamma^{t-1} \int_{\mathcal{S}} \int_\mathcal{A} p_{t-1}(ds_{-1}, da) \right] \; p(ds | s_{-1}, a) \\
        &= p_0(\sigma) + \gamma \int_\sigma \int_{\mathcal{S}} \int_\mathcal{A} \mu^{\pi}_{\gamma}(ds_{-1},da) \; p(ds | s_{-1}, a),
    \end{align}
    where all integrals on $\sigma$ are with respect to $s$, Fubini was applied again, and we used the equality $p_t(ds) = \int_{\mathcal{S}} \int_\mathcal{A} p_{t-1}(ds_{-1}, da) \; p(ds | s_{-1}, a)$ that stems directly from the definition of a Markov Decision Process.
    
    Finally, for any $\sigma \in \Sigma_{\mathcal{S}}$, we know from the $\sigma$-finiteness of $\mu^{\pi}_{\gamma}$ that there exists a sequence $(\sigma_n)_{n \in \mathbb{N}}$ of disjoint measurable sets such that $\forall n \in \mathbb{N}, \mu^{\pi}_{\gamma}(\sigma_n) < +\infty$ and $\sigma = \cup_{n=0}^{\infty} \sigma_n$. Applying Eq.~\ref{eq:conservation} to $\sigma_n$ and summing over $n \in \mathbb{N}$ concludes the proof.
\end{proof}

\subsection{Proof of Theorem \ref{thm:measure}}
\label{app:occupancymeasure}
\measure*
\begin{proof}[Proof of Theorem \ref{thm:measure}]
    Fixing $\sigma\in\Sigma_\mathcal{S}$ and $\alpha\in\Sigma_\mathcal{A}$, we notice that the following sequence is increasing with $T$:
    \begin{align}
        U_T \coloneqq& \mathbb{E}\left[\sum_{t=0}^T \gamma^t\mathds{1}\left(S_t \in \sigma\right) \times\mathds{1}\left(A_t \in \alpha\right) \bigg|\!\!\begin{array}{l}
    S_0\sim p_0(\cdot), A_t \sim \pi(\cdot|H_{t}),\\S_{t+1}\sim p(\cdot|S_t,A_t)\end{array}\right] .\nonumber
    \end{align}
    Therefore, the monotone convergence theorem guarantees that $U_T$ converges on $\mathbb{R}^{\myplus}\cup\{+\infty\}$ when $T$ tends to infinity and its limit is by construction the occupancy measure of $\pi$, which is therefore defined for every state set $\sigma$ and action set $\alpha$.
    
    In order to establish that $\mu_{\gamma}^\pi$ is a measure over the algebra product $\Sigma_\mathcal{S}\times\Sigma_\mathcal{A}$, we need to check (i) its positivity, (ii) that $\mu_{\gamma}^\pi(\emptyset)=0$, and (iii) its countable additivity with respect to disjoint sets. (i) has been established right before, (ii) is a direct consequence that $\mathds{1}\left(S_t \in \emptyset\right)\times\mathds{1}\left(A_t \in \emptyset\right)=0$, and (iii) is a simple summation order change, justified by the positivity of all quantities involved.
\end{proof}

\subsection{Characterization of performance}
\label{app:performanceequivalence}

\performance*
\begin{proof}
    We recall the definition of the performance $\rho^{\pi}_{\gamma}$:
        \begin{align}
            \rho^{\pi}_{\gamma} \coloneqq& \mathbb{E}\left[\sum_{t=0}^\infty \gamma^t R_t \bigg|\!\!\begin{array}{l}
        S_0\sim p_0(\cdot), A_t \sim \pi(\cdot|H_{t}),\\R_t\sim r(S_t,A_t),S_{t+1}\sim p(\cdot|S_t,A_t)\end{array}\right].
        \end{align}
    We start by noting that the existence of $\rho^{\pi}_{\gamma}$ is not guaranteed in the case of $\gamma = 1$ and $\mu_\gamma^\pi(\mathcal{S}) = +\infty$. Assuming it does exist (or that any of those two conditions is not verified), the law of total expectation gives:
        \begin{align}
            \rho^{\pi}_{\gamma} \coloneqq& \mathbb{E}\left[\sum_{t=0}^\infty \gamma^t \mathbb{E}[r(S_t, A_t)] \bigg|\!\!\begin{array}{l}
        S_0\sim p_0(\cdot), A_t \sim \pi(\cdot|H_{t}),\\S_{t+1}\sim p(\cdot|S_t,A_t)\end{array}\right].
        \end{align}
    Applying the law of total expectation a second time, and Fubini-Tonelli's theorem allows to conclude.
\end{proof}

\perfequiv*
\begin{proof}
    Since $\mu_\gamma^\pi = \mu_\gamma^{\tilde{\pi}}$, we get from Lemma~\ref{lem:performance}: $$\rho^{\pi}_{\gamma} =  \int_{\mathcal{S}} \int_{\mathcal{A}} \mathbb{E}\left[r(s,a)\right]\mu_\gamma^\pi(ds,da) =  \int_{\mathcal{S}} \int_{\mathcal{A}} \mathbb{E}\left[r(s,a)\right]\mu_\gamma^{\tilde{\pi}}(ds,da) = \rho^{\tilde{\pi}}_{\gamma},$$
    which concludes the proof.
\end{proof}

See Section~\ref{app:infinite_occupancy_cex} below for the proof of Proposition~\ref{prop:sigma-infinite}.

\subsection{Counter-examples under $\sigma$-infinite measures}
\label{app:infinite_occupancy_cex}

\sigmainfinite*

We detail in the next two sections, two counter-examples proving the proposition.

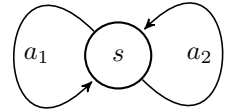
\begin{wrapfigure}{r}{0.35\textwidth}
        \vspace{-40pt}
        \begin{center}
            \scalebox{1}{
                \begin{tikzpicture}[->, >=stealth', scale=1.5 , semithick, node distance=2cm]
                    \tikzstyle{every state}=[fill=white,draw=black,thick,text=black,scale=1]
                    \node[state]    (x0)                {$s$};
                    \path
                    (x0) edge[out=315, in=45, loop, left]    node{$a_2$}     (x0)
                    (x0) edge[out=135, in=225, loop, right]    node{$a_1$}     (x0);
                \end{tikzpicture}
            }
            \vspace{-20pt}
            \caption{Minimal MDP such that $p(s|s,a_1)=1$ and $p(s|s,a_2)=1$.}
            \label{fig:minimalMDP2}
        \end{center}
        \vspace{-20pt}
\end{wrapfigure}
\paragraph{Example where $\tilde{\pi}$ is undetermined}
We use the minimal undiscounted ($\gamma=1$) MDP in figure \ref{fig:minimalMDP2} with a single state $\mathcal{S} = \{s\}$ and two actions $\mathcal{A}= \{a_1,a_2\}$ that loop: $p(s|s,a_1)=1$ and $p(s|s,a_2)=1$. We consider the non-Markovian policy $\pi$ that chooses its action as follows: $\forall i\in\mathbb{N}$, if $t\in[2^i,2^{i+1})$, then $\pi(a_1)=\frac{(-1)^i + 1}{2}$, \textit{i.e.} during each epoch $i$, the policy plays deterministically $a_1$ if $i$ is even otherwise $a_2$ if $i$ is odd.

Then, the ratio $\tilde{\pi}\doteq \frac{\mu_{1}^{\pi}(s,a_1)}{\mu_{1}^{\pi}(s)}$ is undetermined as the limit of the action selection ratio does not converge as $t$ goes to infinity. We could argue that any non-determinisitic Markovian policy would admit the same occupancy measure as $\pi$: $\mu_{1}^{\pi}(s,a_1)=\mu_{1}^{\pi}(s,a_2)=\infty$, but the counter-example in the next subsection shows that this is not always possible.

\paragraph{Example where no Markovian policy reproduces the occupancy of a non-Markovian policy}
We use the minimal undiscounted ($\gamma=1$) MDP in figure \ref{fig:minimalMDP} with a single state $\mathcal{S} = \{s\}$ and two actions $\mathcal{A}= \{a_1,a_2\}$, such that $p(s|s,a_1)=1$ and $a_2$ is terminal. We consider the non-Markovian policy $\pi$ that chooses its action uniformly at timestep $t=0$ and deterministically $a_1$ for $t\geq 1$. Its occupancy measure is therefore:
\begin{align}
     \mu_{1}^{\pi}(s,a_1) &= \infty & \mu_{1}^{\pi}(s,a_2) &= \frac{1}{2}.
\end{align}

The set of Markovian policies $\pi_\theta\in\Pi_\textsc{m}$ may be parametrized with a single parameter $\pi_\theta(a_1|s)\doteq\theta\in[0,1]$ and $\pi_\theta(a_2|s)\doteq 1-\theta\in[0,1]$. A Markovian policy $\pi_\theta$ admits the following occupancy measure:
\begin{align}
     \text{if }\theta<1, \quad\quad \mu_{1}^{\pi_\theta}(s,a_1) &= \frac{1}{1-\theta} & \mu_{1}^{\pi_\theta}(s,a_2) &= 1,\\
     \text{if }\theta=1, \quad\quad \mu_{1}^{\pi_\theta}(s,a_1) &= \infty & \mu_{1}^{\pi_\theta}(s,a_2) &= 0,
\end{align}
none of which match the occupancy of $\pi$.

\subsection{Idempotence and projection}
\label{app:idempotence}
\idempotence*
\begin{proof}
    Given the definition of $\tilde{\pi}(\alpha|s) = \frac{d \mu^\pi_{\gamma}(\cdot, \alpha)}{d \mu^\pi_{\gamma}(\cdot ,  \mathcal{A})}(s)$, to show that $\tilde{\pi}=\pi$, we simply need to prove that for any $\sigma \in \Sigma_{\mathcal{S}}$ and $\alpha \in \Sigma_{\mathcal{A}}$:
    \begin{align}
        \mu^\pi_{\gamma}(\sigma, \alpha) = \int_\sigma \mu^\pi_{\gamma}(ds, \mathcal{A}) \pi(\alpha|s).
    \end{align}
    From the definition~\ref{eq:occupancy-measure} of $\mu^\pi_{\gamma}(\sigma, \alpha)$, we have:
    \begin{align}
        \mu^{\pi}_{\gamma}(\sigma,\alpha) =& \mathbb{E}\left[\sum_{t=0}^\infty \gamma^t\mathds{1}\left(S_t \in \sigma\right)\times\mathds{1}\left(A_t \in \alpha\right) \bigg|\!\!\begin{array}{l}
    S_0\sim p_0(\cdot), A_t \sim \pi(\cdot|H_{t}),\\S_{t+1}\sim p(\cdot|S_t,A_t)\end{array}\right] \\
    =& \sum_{t=0}^\infty \gamma^t \int_\sigma \int_\alpha p_t(ds, da) \label{eq:sum}
    \end{align}
    where we reused the notations from the proof of Proposition~\ref{prop:conservation}. Given the Markovian nature of $\pi$, we see that: $p_t(ds, da) = p_t(ds) \pi(da|s)$. Reintroducing this into~\ref{eq:sum}, we get:
    \begin{align}
        \mu^{\pi}_{\gamma}(\sigma,\alpha) =& \sum_{t=0}^\infty \gamma^t \int_\sigma \int_\alpha p_t(ds) \pi(da|s) = \int_\sigma \sum_{t=0}^\infty \gamma^t p_t(ds) \pi(\alpha|s) = \int_\sigma \mu^{\pi}_{\gamma}(ds) \pi(\alpha|s),
    \end{align}
    which proves that $\pi$ is indeed the Radon-Nikodym derivative $\frac{d \mu^\pi_{\gamma}(\cdot, \alpha)}{d \mu^\pi_{\gamma}(\cdot ,  \mathcal{A})}(s)$ and shows the equality to $\tilde{\pi}$ up to a $\mu^\pi_{\gamma}(\cdot, \mathcal{A})$-null set.
\end{proof}

\subsection{Absolute continuity of finite trajectory distribution}
\label{app:absolutelycontinuous}
\actraj*
\begin{proof}
We prove this result by induction on $t \geq 0$. 
For $t = 0$, let us consider $(\sigma, \alpha) \in \Sigma_\mathcal{S} \times \Sigma_\mathcal{A}$ such that $\tau^{\tilde{\pi}}_{0}(\sigma, \alpha) = 0$.
This implies that:
\begin{align}
    \int_\sigma \int_\alpha p_0(ds) \tilde{\pi}(da | s) = \int_\sigma p_0(ds) \frac{d \mu^\pi_{\gamma}(\cdot, \alpha)}{d \mu^\pi_{\gamma}(\cdot )}(s) = 0.
\end{align}
Letting $N_\alpha = \{ s \in \mathcal{S} \;|\; \frac{d \mu^\pi_{\gamma}(\cdot, \alpha)}{d \mu^\pi_{\gamma}(\cdot )}(s) = 0 \}$, we see that necessarily $p_0(N_\alpha \cap \sigma) = p_0(\sigma)$ (otherwise the above integrals would not be 0). Now, by definition of the Radon-Nikodym derivative, we have:
\begin{align}
    \mu^\pi_{\gamma}(N_\alpha, \alpha) = \int_{N_\alpha} \mu^\pi_{\gamma}(ds) \frac{d \mu^\pi_{\gamma}(\cdot, \alpha)}{d \mu^\pi_{\gamma}(\cdot )}(s) = 0.
\end{align}
Since $p_0(N_\alpha \cap \sigma) = p_0(\sigma)$, we know that: $\tau_0^\pi(\sigma, \alpha) = \tau_0^\pi(N_\alpha \cap \sigma, \alpha) \leq \mu^\pi_{\gamma}(N_\alpha, \alpha) = 0$, which concludes the base case.

Let us proceed to the induction step. We consider $(\sigma, \alpha) \in (\Sigma_\mathcal{S} \times \Sigma_\mathcal{A})^{t+1}$, with $\tau^{\tilde{\pi}}_{t+1}(\sigma, \alpha) = 0$, and aim to prove that $\tau^{\pi}_{t+1}(\sigma, \alpha) = 0$. We let $\sigma_{|t}$ and $\alpha_{|t}$ denote the first $t$ components of $\sigma$ and $\alpha$, and $\sigma_{t+1}$ and $\alpha_{t+1}$ their $t+1$-th. From the Markov property of the various objects involved, we have:
\begin{align}
    \tau_{t+1}^{\tilde{\pi}}(\sigma, \alpha) = \int_{\alpha_{t+1}} \int_{\sigma_{t+1}} \int_{\sigma_{|t}, \alpha_{|t}} \tilde{\pi}(da_{t+1} \;|\; s_{t+1}) p(ds_{t+1} \;|\; s_{t},a_{t}) \tau_{t}^{\tilde{\pi}}(ds_{|t}, da_{|t}). 
\end{align}
As far as $\pi$ is concerned, we have:
\begin{align}
    \tau_{t+1}^{\pi}(\sigma, \alpha) = \int_{\alpha_{t+1}} \int_{\sigma_{t+1}} \int_{\sigma_{|t}, \alpha_{|t}} \pi(da_{t+1} \;|\; s_{|t+1}, a_{|t}) p(ds_{t+1} \;|\; s_{t},a_{t}) \tau_{t}^{\pi}(ds_{|t}, da_{|t}). 
\end{align}
Since $\tau_{t+1}^{\tilde{\pi}}(\sigma, \alpha) = 0$, three cases are possible (corresponding to the measure of the three integrals from right to left being null): 
\begin{enumerate}[(i).]
    \item $\tau_{t}^{\tilde{\pi}}(\sigma_{|t}, \alpha_{|t}) = 0$. In this case, the induction hypothesis implies $\tau_{t}^{\pi}(\sigma_{|t}, \alpha_{|t}) = 0$, and thus $\tau_{t+1}^{\pi}(\sigma, \alpha) = 0$.
    \item $\int_{\sigma_{|t}, \alpha_{|t}} p(\sigma_{t+1} \;|\; s_{t},a_{t}) \tau_{t}^{\tilde{\pi}}(ds_{|t}, da_{|t}) = 0$. We define $N_1 = \{ (s,a) \;|\; p(\sigma_{t+1} \;|\; s, a) \neq 0\}$, and see that necessarily $\tau_{t}^{\tilde{\pi}}(N_1) = 0$, implying that $\tau_{t}^{\pi}(N_1) = 0$ by the induction hypothesis, and thus that $\tau_{t+1}^{\pi}(\sigma, \alpha) = 0$.
    \item $\int_{\sigma_{t+1}} \tau_{t+1}^{\tilde{\pi}}(ds_{t+1}) \tilde{\pi}(\alpha_{t+1}|s_{t+1}) = 0$, where we overloaded the notations by letting $\tau_{t+1}^{\tilde{\pi}}(ds_{t+1})$ be the distribution of the $t+1$-th state in the trajectory when following $\tilde{\pi}$. Similarly as above, this implies that: $\tau_{t+1}^{\tilde{\pi}}(N_\alpha \cap \sigma_{t+1}) = \tau_{t+1}^{\tilde{\pi}}(\sigma_{t+1})$. In addition, using the same argument as in (ii), the induction hypothesis can be applied to get $\tau_{t+1}^{\pi}(\sigma_{|t} \times (N_\alpha \cap \sigma_{t+1}), \alpha) = \tau_{t+1}^{\pi}(\sigma, \alpha)$. Finally, $\tau_{t+1}^{\pi}(\sigma_{|t} \times (N_\alpha  \cap \sigma_{t+1}), \alpha) \leq \mu^\pi_{\gamma}(N_\alpha, \alpha) = 0$.
\end{enumerate}
This concludes the induction step, and with it the proof of the proposition.
\end{proof}

\subsection{Finiteness and $\sigma$-finiteness}
\label{app:finiteness}

The ($\sigma$-)finiteness of the occupancy measure is an interesting property as it will be required to avoid encountering indeterminate formulas. 
We note that $\gamma<1$ suffices to guarantee that all Markovian policies have a finite occupancy measure for any policy. Below, we derive a more general characterisation of MDPs that admit only policies with finite occupancy measures.
\finiteness*
\begin{proof}
    We notice that the expected performance under reward $r(s,a) = 1 \forall s,a$ is equal to the occupancy measure $\mu_{\gamma}^\pi(\mathcal{S})$ over the full state-action pair set. We use the well known theoretical result~\cite{Sutton1998} that there exists a deterministic Markovian policy that optimises any MDP. If the occupancy measure is finite for all deterministic Markovian policies then, we have for all policy $\pi$:
    \begin{align}
        \mu_{\gamma}^\pi(\mathcal{S})\leq \max_{\pi_{dm}\in\Pi_\textsc{dm}} \mu_{\gamma}^{\pi_{dm}}(\mathcal{S}) <\infty,
    \end{align}
    which establishes the finiteness of the occupancy measure of $\pi$.
\end{proof}

However, it is interesting to notice that even if all deterministic Markovian policies have $\sigma$-finite occupancy measures, a non-Markovian policy $\pi$ may admit a $\sigma$-infinite occupancy measure. We construct below such a counter-example.

We consider the deterministic continuous MDP $m$ where $\mathcal{S}=[0,2]$, $s_0=0$, $\mathcal{A}=(0,1]$, $p(s+a|s,a)=1$, $\gamma=1$, and the trajectory terminates when $s+a>2$. 

We start by establishing that any deterministic policy $\pi_{d}$ (Markovian or non-Markovian) has a $\sigma$-finite occupancy measure: since the environment and the policy are deterministic, every trajectory is the same. Since $\mathcal{A}=(0,1]$ and $p(s+a|s,a)=1$, the state $s_t$ is strictly increasing with $t$. This implies that either:
\begin{enumerate}
    \item the trajectory terminates and the occupancy measure is finite (and therefore $\sigma$-finite), 
    \item or the trajectory is upper bounded by 2 and by the monotone convergence theorem, it must converge to some state $s_\infty$ without ever reaching it (if $s_t=s_\infty$ for some $t$, the $s_{t+1}>s_\infty$, which is a contradiction).
\end{enumerate}
Since case 1. proves our point, we focus on case 2. from now. Still from the strictly increasing property, we infer that the occupancy measure of $\pi_{d}$ is 1 for the states $s_t$ on the deterministic trajectory and 0 everywhere else. We consider the following partition of $\mathcal{S}$:
\begin{align}
    \sigma_0 \doteq [s_\infty,2] \quad\quad\quad \forall i>0, \quad \sigma_i \doteq [s_{i-1},s_i).
\end{align}
By construction, $\mu^{\pi_d}_\gamma(\sigma_0) = 0$, $\forall i>0, \mu^{\pi_d}_\gamma(\sigma_i) = 1$, and $\mathcal{S}=\bigcup_{i\in\mathbb{N}}\sigma_i$, which proves the $\sigma$-finiteness of $\mu^{\pi_d}_\gamma$.

Now, we construct a policy $\pi$ that is $\sigma$-infinite. 
\begin{align}
    \pi(\cdot|t=0) = \mathcal{U}([0,1]) \quad\quad\quad \pi(\cdot|t>0) = \frac{1}{t} - \frac{1}{t+1}
\end{align}
Let $A_0$ denote the first action, which is the only stochastic one, then the state reached at time $t$ is:
\begin{align}
    S_{t+1} = S_{t} + \frac{1}{t} - \frac{1}{t+1} = A_0 + \sum_{t'=1}^{t} \frac{1}{t'} - \frac{1}{t'+1} = A_0 + 1 - \frac{1}{t+1},
\end{align}
which converges to $A_0+1$ as $t$ tends to infinity. For any segment $[b,c]\subset [0,1]$ with $b<c$, $\mathbb{P}(A_0\in(b,c])=c-b>0$. Then, we look at the measure of $[b+1,c+1]$:
\begin{align}
    \mu^{\pi}_\gamma([b+1,c+1]) &\geq \mathbb{P}\left(A_0\in(b,c]\right) \mathbb{E}\left[\sum_{t=0}^\infty \mathds{1}(s_t\in[b+1,c+1])\;\bigg|\; A_0\in(b,c]\right]\\
    &= \mathbb{P}\left(A_0\in(b,c]\right) \mathbb{E}\left[\sum_{t=\lceil\frac{1}{A_0-b}\rceil}^\infty 1\;\bigg|\; A_0\in(b,c]\right] = \infty,
\end{align}
which concludes the proof that $\mu^{\pi}_\gamma$ is $\sigma$-infinite.

\end{document}